\def\eqref#1{equation~\ref{#1}}
\def\1{\bm{1}}
\DeclareMathAlphabet{\mathsfit}{\encodingdefault}{\sfdefault}{m}{sl}
\SetMathAlphabet{\mathsfit}{bold}{\encodingdefault}{\sfdefault}{bx}{n}
\newcommand{\normalrow}{\rowcolor{gray!30}}
\definecolor{deltaBg}{RGB}{220,230,255} 
\newcommand{\rowhighlight}{\rowcolor{deltaBg}}
\definecolor{my_green}{RGB}{51,102,0}
\definecolor{my_purple}{RGB}{160, 43, 147}
\definecolor{my_blue}{RGB}{0, 0, 0}
\newtheorem{theorem}{Theorem}
\NewDocumentCommand{\yafu}
{ mO{} }{\textcolor{red}{\textsuperscript{\textit{yafu}}\textsf{\textbf{\small[#1]}}}}
\title{Diversity-Incentivized Exploration for Versatile Reasoning}
\author{Zican Hu\textsuperscript{12$\ast$} \quad 
Shilin Zhang\textsuperscript{12$\ast$} \quad
Yafu Li\textsuperscript{24$\dagger$\textrm{\Letter}} \quad
Jianhao Yan\textsuperscript{23} \quad
Xuyang Hu\textsuperscript{2} \quad \\
\textbf{Leyang Cui}\textsuperscript{3} \quad 
\textbf{Xiaoye Qu}\textsuperscript{\textbf{2}} \quad 
\textbf{Chunlin Chen}\textsuperscript{\textbf{1}} \quad
\textbf{Yu Cheng}\textsuperscript{\textbf{4\textrm{\Letter}}} \quad
\textbf{Zhi Wang}\textsuperscript{\textbf{12\textrm{\Letter}}}
\\
\textsuperscript{1} Nanjing University
\textsuperscript{2} Shanghai AI Laboratory\\
\textsuperscript{3} Westlake University
\textsuperscript{4} The Chinese University of Hong Kong
\\
\texttt{\{zicanhu,shilinzhang\}@smail.nju.edu.cn} \quad \texttt{yafuly@gmail.com}\\
\texttt{chengyu@cse.cuhk.edu.hk} \quad \texttt{zhiwang@nju.edu.cn}
}
\begin{document}
\maketitle
\renewcommand{\thefootnote}{}
\footnotetext{$^\ast$Equal contributions. Zican Hu and Shilin Zhang are listed alphabetically by last name. This work was conducted during internship at Shanghai AI Laboratory. $^\textrm{\Letter}$Corresponding authors. $^\dagger$Project lead.}
\renewcommand{\thefootnote}{\arabic{footnote}}

\begin{abstract}
Reinforcement Learning with Verifiable Rewards (RLVR) has emerged as a crucial paradigm for incentivizing reasoning capabilities in Large Language Models (LLMs).
Due to vast state-action spaces and reward sparsity in reasoning tasks, existing methods often struggle with deficient exploration and poor sample efficiency.
In the paper, we propose \textbf{DIVER} (\textbf{D}iversity-\textbf{I}ncentivized Exploration for \textbf{V}ersatil\textbf{E} \textbf{R}easoning), an innovative framework that highlights the pivotal role of global sequence-level diversity to incentivize deep exploration for versatile reasoning.
We first conduct a primary empirical study to reveal a strong positive correlation between global diversity and reasoning capacity.
Building on this insight, we introduce global diversity incentives as an intrinsic reward to promote deep exploration in a semantically structured space.
Incorporating the intrinsic reward, we develop a potential-based reward shaping mechanism to preserve optimal policy invariance and design simple heuristics to mitigate possible reward hacking.
Experimental results show that DIVER outperforms competitive RLVR baselines with various exploration strategies on both in-domain and out-of-domain tasks, excelling in both Pass@1 and Pass@k evaluations.
Our code is available at \textcolor{magenta}{\href{https://github.com/NJU-RL/DIVER}{https://github.com/NJU-RL/DIVER}}.


\end{abstract}


\section{Introduction}

\begin{wrapfigure}{r}{0.52\textwidth}
  \centering 
  \includegraphics[width=\linewidth]{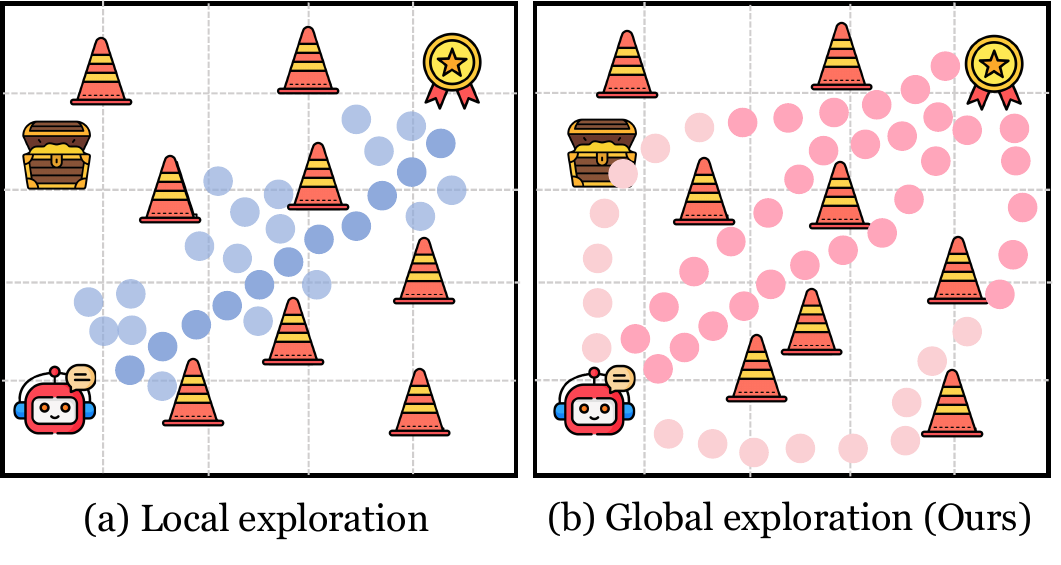}
  \caption{Local token-level vs. Global sequence-level exploration.
  We incentivize deep exploration to broaden diverse pathways for versatile reasoning.
  }
  \label{fig:idea}
\end{wrapfigure}


Reinforcement Learning with Verifiable Rewards (RLVR) has advanced reasoning capabilities in Large Language Models (LLMs) through rule-based verification on model's responses~\citep{r1,orz,simplerlzoo}.
A central challenge is the fundamental exploration–exploitation tradeoff highlighted in classic RL literature~\citep{lillicrap2015continuous, haarnoja2018soft}.
Unlike traditional RL environments with relatively small, well-defined state-action spaces~\citep{sutton2018reinforcement,wang2024meta,zhang2025text}, LLM policies operate in vast, high-dimensional textual spaces with complex semantics, where the number of possible state-action pairs grows exponentially with sequence length~\citep{gupta2024language,ahn2024large,hu2025divide}. 
This combinatorial explosion greatly increases the difficulty of effective exploration in textual reasoning, especially under limited computing resources.
Furthermore, the inherent reward sparsity in challenging reasoning tasks creates massive ``reward deserts" where the model receives little to no meaningful feedback most of the time~\citep{haarnoja2017reinforcement,wu2025mixture}, hindering the discovery of improved pathways due to the lack of valid feedback signals~\citep{zheng2024online,zhan2026exgrpo}.
Therefore, existing methods often struggle with deficient exploration and poor sample efficiency~\citep{deng2025trial,chen2025pass}.


Thus, introducing efficient exploration mechanisms becomes essential for enabling LLMs to continually self-improve their reasoning abilities through the iterative trial-and-error cycle~\citep{ladosz2022exploration,song2025outcome}.
Current LLM literature has explored several attempts based on increasing the uncertainty in the policy's action selection process~\citep{yao2025diversity}, such as alleviating policy entropy collapse~\citep{cui2025entropy}, branching only high-entropy tokens~\citep{liu2025ettrl}, and retaining policy gradient updates for high-entropy tokens only~\citep{wang2025beyond}.
These dithering strategies inject randomness into the policy's action distribution, facilitating exploration by promoting the selection of uncertain actions. 
Typically, they incorporate diversity at the \textit{local} action level (i.e., token level) to help the policy escape local optima and accelerate training.





While action-level uncertainty estimates allow the agent to direct its exploration toward potentially informative states, they are insufficient on their own to ensure efficient exploration~\citep{osband2016deep}.
Provably efficient schemes require far-sighted, \textit{deep} exploration that is directed over multiple timesteps, inducing temporally-extended diversity at a higher level~\citep{osband2019deep}. 
Unlike local exploration, deep exploration is essential to significantly broaden the \textit{global} sequence-level diversity of reasoning pathways, stimulating the model to discover novel and effective solution patterns.
This principle of optimizing global diversity is essential for advancing the deep exploration capabilities of frontier RL algorithms~\citep{eysenbach2019diversity,grillotti2024quality}, showcasing remarkable efficiency for solving intricate tasks in a more human-like manner~\citep{celik2024acquiring}.
However, efficient deep exploration mechanisms remain largely underexplored in LLM reasoning.



Built on these insights, we propose \textbf{DIVER} (\textbf{D}iversity-\textbf{I}ncentivized Exploration for \textbf{V}ersatil\textbf{E} \textbf{R}easoning) that emphasizes the pivotal role of global sequence-level diversity to incentivize deep exploration for versatile LLM reasoning.
We first conduct a primary empirical study where evidence reveals a strong positive correlation between global diversity and reasoning capacity.
This finding motivates us to explicitly optimize sequence-level diversity during RL training.
To this end, we formulate the global diversity across group responses as an intrinsic reward, incentivizing deep exploration in a semantically structured space.
When incorporating this intrinsic reward, we design a potential-based reward shaping mechanism to preserve optimal policy invariance and develop simple heuristics to mitigate possible reward hacking.
Specifically, we employ two easy-to-implement metrics to quantify the diversity inherent across group responses, Textual Diversity and Equational Diversity, while in principle any other metrics are compatible with our framework.
We hope this study inspires further investigation into global diversity and incentivizes efficient deep exploration mechanisms for broadening LLM's versatile reasoning capacities.

Experimental results demonstrate that DIVER consistently outperforms competitive RLVR baselines across six math reasoning benchmarks (AIME24/25, AMC, OlympiadBench, Minerva, MATH500). 
Notably, DIVER shows stronger generalization capabilities with a \textbf{+3.2} points improvement over the GRPO baseline on out-of-domain benchmarks (ARC-c, GPQA*, MMLU-Pro). 
To evaluate exploration effectiveness through multi-attempts, we employ the Pass@k metric, where DIVER consistently surpasses all baselines. The most substantial gain appears on AIME25, where DIVER achieves a \textbf{+6.7} points improvement in Pass@32 performance. 
Our in-depth analysis reveals that DIVER's advantage stems from its global sequence-level diversity and deep exploration capability.

\section{Related Work}
\paragraph{Exploration in RL.} 
Exploration techniques are key to solving high-dimensional, sparse-reward RL problems~\citep{ladosz2022exploration}.
They can be roughly categorized into three kinds: 1) injecting stochastic noise into behavior policies~\citep{lillicrap2015continuous, fujimoto2018addressing}; 2) incorporating policy entropy into the optimization objective~\citep{haarnoja2017reinforcement,haarnoja2018soft}; and 3) introducing intrinsic rewards independent of environmental feedback, such as count-based bonuses~\citep{bellemare2016unifying,wang2026thinklonger}, information gains~\citep{houthooft2016vime}, or the novelty of experience~\citep{pathak2017curiosity,burda2019exploration}. 
Our work extends the third category from classical RL to LLM reasoning tasks.
Frontier RL algorithms adopt the principle of promoting global diversity to improve their deep exploration capabilities~\citep{liu2026beyond}, such as learning diverse skills in an unsupervised manner~\citep{eysenbach2019diversity}, inducing diverse behaviors with constrained optimization~\citep{grillotti2024quality}, and acquiring diverse skills using mixture-of-experts~\citep{celik2024acquiring}.
As efficient deep exploration mechanisms remain largely underexplored in LLM reasoning, our method aims to address this gap.

\paragraph{Diversity for LRMs.}
Current literature has attempted to promote diversity in LLM reasoning by increasing the uncertainty in the action selection process. 
Some studies directly manage individual tokens at a micro level.
\citep{cui2025entropy} applies clip and KL penalty constraints to tokens that tend to cause entropy collapse. 
\citep{liu2025ettrl} selectively branches high-entropy tokens only to enhance exploration in test-time RL. 
\citep{wang2025beyond} leverages high-entropy minority tokens to steer the model toward diverse reasoning pathways. 
Other approaches achieve similar results by reformulating optimization objectives.
\citep{yao2025diversity} injects a token-level diversity measure into policy optimization.
\citep{cheng2025reasoning} augments the advantage function with an entropy-based term.
\citep{yu2025dapo} increases the upper bound for clipping the importance sampling ratio to emphasize low-probability tokens.
In summary, these methods typically incorporate diversity at the local action level, facilitating exploration by promoting the selection of uncertain actions.

Recently, \citep{chen2025pass} uses the Pass@k metric as the training reward, leveraging multiple candidate solutions in one trial to enhance the exploration abilities of LLMs. 
However, it does not explicitly account for global diversity, since it does not seek to optimize diversity across candidate solutions.
A concurrent work is~\citep{li2025jointly} that trains a partitioning classifier to measure diversity and amplifies the advantage function by the diversity assessment.
\textcolor{my_blue}{Beyond algorithmic-level exploration strategies, data-level approaches have also been proposed to improve training efficiency. Prompt selection methods~\citep{chen2025self, bae2025online, qu2025can} filter training samples based on difficulty or informativeness to enhance exploration efficiency.}
The key distinction of our method lies in how diversity is measured and how it is embedded within policy optimization.

\section{Method}
\begin{figure}[t]
    \centering
    \includegraphics[width=0.95\linewidth]{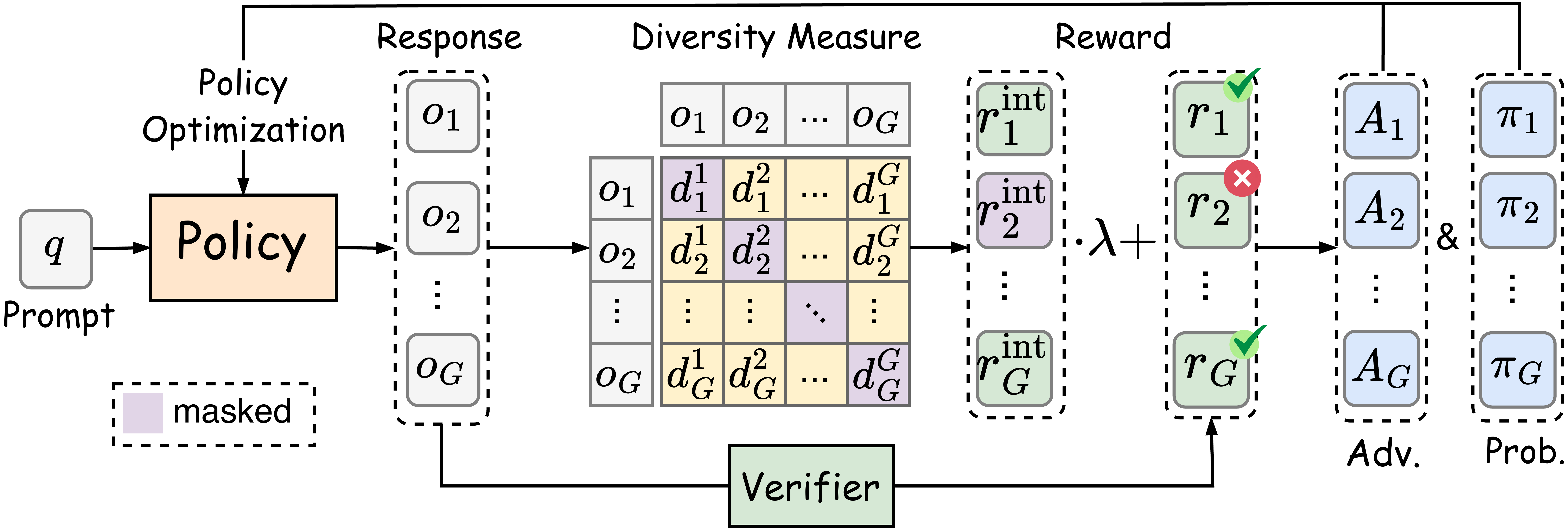}
    \caption{Overview of DIVER where we formulate the global sequence-level diversity of response $o_i$ within a group of $G$ rollouts as an intrinsic reward $r_i^{\text{int}}$ to incentivize deep exploration.
    Diversity incentives are applied to correct solutions only to align shaping rewards with the true objective.}
    \label{fig:framwork}
\end{figure}

In this section, we first give the problem statement where the reasoning task is formulated as an RL problem.
Then, we present a primary empirical study to show the impact of global sequence-level diversity on reasoning performance.
Finally, we introduce DIVER in detail, with principled formulations to quantify diversity, guarantee optimal policy invariance, and mitigate reward hacking.


\subsection{Problem Statement}
RL is based on the Markov decision process (MDP) formulation with a tuple $(S,A,T,R,\gamma)$, where $S/A$ is the state/action space, $T(s'|s,a):S\times A\times S\mapsto [0,\infty]$ is the transition operator that defines the probability density function of transitioning to state $s'\in S$ conditioned on taking action $a\in A$ in state $s\in S$, $R(s,a):S\times A\mapsto \mathbb{R}$ is the reward function, and $\gamma\in (0,1]$ is the discount factor.

\textbf{Group Relative Policy Optimization (GRPO).}
We build the DIVER framework upon GRPO~\citep{grpo}, a prominent RLVR baseline that incentivizes reasoning capability in LLMs using a rule-based verifiable reward function.
GRPO discards the critic model and instead computes advantages using rule-based rewards from group-level comparisons. 
For each query $q$, the policy $\pi_{\theta_{\text{old}}}$ generates $G$ candidate responses $\{o_1, ..., o_G\}$. Each response is evaluated by a binary reward function $r_i\in \{0, 1\}$ that checks whether the extracted answer matches the golden answer, yielding rewards $\{r_1, ..., r_G\}$. This verifiable reward design effectively mitigates reward hacking~\citep{hacking}, enabling robust scaling of RL training. The policy $\pi_{\theta}$ is then updated by maximizing:
\begin{equation}\label{grpo}
J_{\text{GRPO}}(\theta) = \mathbb{E}_{q\sim\mathcal{D}, \{o_i\}_{i=1}^G\sim\pi_{\theta_{old}(\cdot|q)}} \left[ \frac{1}{G} \sum_{i=1}^{G}  \sum_{t=1}^{|o_i|} \text{CLIP}(\rho_{i,t}, A_i) - \beta KL(\pi_\theta||\pi_{\text{ref}})  \right],
\end{equation}
where $\rho_{i,t} \!=\! \frac{\pi_\theta(o_{i,t}|q, o_{i,<t})}{\pi_{\theta_{\text{old}}}(o_{i,t}|q, o_{i,<t})}$ is the importance sampling ratio for the token at timestep $t$ in response $o_i$. 
The clipped objective $\text{CLIP}(\rho_{i,t}, A_i) \!=\! \min\left(\rho_{i,t} A_i, \text{clip}(\rho_{i,t}, 1-\epsilon, 1+\epsilon)A_i\right)$ ensures stable updates within the trust region~\citep{schulman2017proximal}.
The KL regularization term constrains the policy from deviating too far from a reference model $\pi_{\text{ref}}$.
The advantage $A_i$ is computed as:
\begin{equation}
A_i = \frac{r_i - \mu_r}{\sigma_r}, \quad \text{where} \quad \mu_r = \frac{1}{G}\sum_{j=1}^{G} r_j, \quad \sigma_r = \sqrt{\frac{1}{G}\sum_{j=1}^{G} (r_j - \mu_r)^2}.
\label{equ:adv}
\end{equation}

\subsection{The Impact of sequence-Level Diversity on RLVR} 
\label{subsec:analyze}

\begin{figure}[t]
    \centering
    \includegraphics[width=1\linewidth]{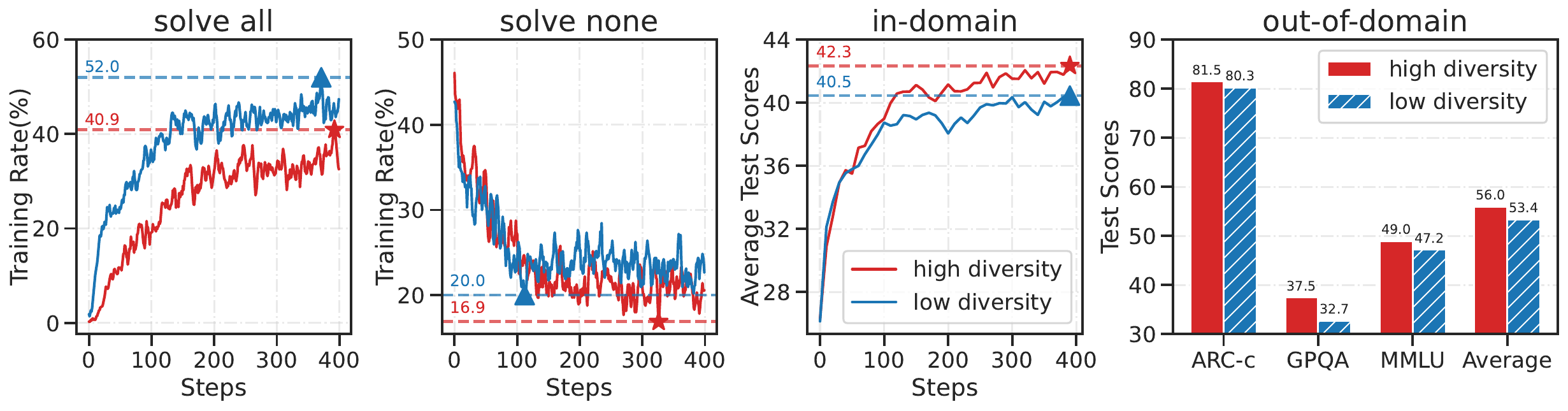}
    \caption{Performance comparison between high-diversity (red) and low-diversity (blue) training. \texttt{solve all}: Number of samples with all rollouts correctly solved. \texttt{solve none}: Samples with no correct rollouts. \texttt{in-domain}: Average test scores across training steps for in-domain benchmarks. \texttt{out-of-domain}: Final performance for out-of-domain benchmarks.}
    \label{fig:div_analize}
\end{figure}

While a line of recent work~\citep{liu2025ettrl,wang2025beyond} focuses on promoting local token-level diversity, we explore and analyze how global sequence-level diversity within data affects RL fine-tuning for LRMs. 
We first conduct an illustrative experiment that compares training the model using rollouts with different levels of sequence-level diversity.

Figure~\ref{fig:div_analize} shows the performance of filtering GRPO rollouts into high-diversity (red) and low-diversity (blue) subsets to train separate models, using the proposed diversity metrics (TD and ED in Sec.~\ref{sec:metrics}). The experimental details are provided in the Appendix~\ref{app:filter}.
During training, high-diversity training simultaneously yields a lower ``solve all" rate (1.1 points decrease) and a lower ``solve none" rate (3.1 points decrease).
Intuitively, the high-diversity scheme trades off a small fraction of responses to explore a wider range of reasoning patterns, enabling the search for novel breakthroughs when conventional solutions fail.
This property is particularly appealing in practice, since achieving success on complex problems carries greater value than obtaining uniformly correct answers on simple questions. 
During testing, high-diversity training achieves higher test scores, with an increase of \textbf{+1.8} points for in-domain benchmarks and an increase of \textbf{+2.6} points for out-of-domain benchmarks.
This encouraging result verifies that promoting global diversity can broaden LLM's reasoning capacity.
A noteworthy point is that the superiority of high-diversity training is more pronounced for out-of-domain tasks compared to in-domain ones.
This observation is consistent with our motivation, where emphasizing diversity enables exploring a broader spectrum of reasoning patterns and expands generalization capacity.

\subsection{Metrics for Quantifying sequence-Level Diversity}\label{sec:metrics}
The above insight highlights the necessity of promoting global diversity to enable more effective exploration for RLVR, thereby incentivizing versatile LLM reasoning.
Naturally, it is crucial to develop efficient metrics capable of quantifying sequence-level diversity in reasoning tasks.
To this end, we design two easy-to-implement metrics, Textual Diversity and Equational Diversity. 
As a general framework, our method is also compatible with any other diversity metrics.

\paragraph{Textual Diversity (TD).}
It refers to the text-level mutual dissimilarity across group responses from a given query.
We adopt the BLEU score~\citep{papineni2002bleu}, a popular automated and inexpensive metric that correlates highly with human evaluation, to quantify the text similarity, and invert it to measure dissimilarity.
In essence, BLEU measures similarity by calculating the overlap of n-grams (word sequences) between a candidate text and the reference text, with higher scores indicating greater similarity. 
Appendix~\ref{app:bleu} presents the detailed definition.

For each candidate response $o_i$ in the group $\{o_1,...,o_G\}$, we calculate its BLEU similarity to every other response and derive its TD within the group as
\begin{equation}
\mathrm{TD}(o_i)
= \frac{1}{G-1}\sum\nolimits_{j\in[G]\setminus\{i\}}
\Bigl(1-\mathrm{BLEU}(o_i,o_j)\Bigr).   
\end{equation}

\paragraph{Equational Diversity (ED).}
It refers to the differences in the formulas contained across group responses to a given query, aiming to characterize underlying reasoning patterns for mathematical tasks.
A math problem often admits multiple solutions with distinct formulaic patterns, and exploring diverse problem-solving approaches can increase the likelihood of finding the correct answer.

Let $\mathcal{F}(o_i)$ denote the set of formulas extracted from response $o_i$, and  $\mathcal{F}_{-i}$ denote the set of formulas collected from other responses in the same group as
\begin{equation}
\mathcal{F}_{-i}=\bigcup\nolimits_{j\in[G]\setminus\{i\}}\mathcal{F}(o_j),
\quad [G]=\{1,\ldots,G\}.
\end{equation}
Then, the per-response equational diversity is defined as the ratio of unique formulas in response $o_i$ to  the total number of formulas it contains:
\begin{equation}
\mathrm{ED}(o_i)= \dfrac{\bigl|\mathcal{F}(o_i)\setminus\mathcal{F}_{-i}\bigr|}{\bigl|\mathcal{F}(o_i)\bigr|},~~~~\text{if}~~|\mathcal{F}(o_i)|>0;~~~~\text{or}~~0,~~\text{otherwise}.
\end{equation}

\subsection{Promoting Global Diversity for Deep Exploration}
\label{subsec:div_rs}
The findings in Sec.~\ref{subsec:analyze} motivate us to explicitly optimize global diversity during RL training.
As shown in Figure~\ref{fig:framwork}, we formulate the global diversity as an intrinsic reward to incentivize deep exploration in a semantically structured space.
For a group of responses $\{o_1,...,o_G\}$, we calculate the pairwise dissimilarity using the proposed diversity metrics in Sec.~\ref{sec:metrics}, yielding a $G\!\times\! G$ matrix $D$ where each element $d_i^j$ denotes the diversity between responses $o_i$ and $o_j$.
Then, the diversity of response $o_i$ within the group is calculated by simply averaging its dissimilarities to all others as $d(o_i)\!=\!\frac{1}{G-1}\sum_{j\ne i}^{G}d_i^j$, i.e., averaging across the corresponding row in the diversity matrix $D$.

To promote global diversity in RLVR, a natural option is to directly supply the quantified diversity as an additional reward to guide the learning process.
However, this naive shaping can change the optimal policy and mislead the agent into learning suboptimal policies~\citep{ng1999policy}.
Hence, we adopt a potential-based reward shaping scheme to preserve optimal policy invariance when incorporating the intrinsic reward~\citep{wang2023efficient,muller2025improving}. 
We formulate the \textbf{intrinsic reward} $R_{\text{int}}$ as the difference between the sequence-level diversities $d(\cdot)$ of adjacent states as 
\begin{equation}\label{eq:rint}
    R_{\text{int}}(s_t,a_t,s_{t+1})=\gamma d(s_{t+1}) - d(s_t),
\end{equation}
where $d(\cdot)$ is the exactly the potential function over states $s\in S$.
In the LLM setting of Eq.~(\ref{grpo}), the states and action within a given query-response pair are defined as $s_t:=[q,o_{i,\le t}]$, $a_t:=o_{i,t+1}$, and $s_{t+1}:=[q,o_{i,\le t+1}]$.
Then, the specific intrinsic reward becomes
\begin{equation}
    R_{\text{int}}([q, o_{i,\le t}],o_{i,t+1}, [q,o_{i,\le t+1}])=\gamma d([q, o_{i,\le t}]) - d([q,o_{i,\le t+1}]), \quad i=1,...,G.
\end{equation}
Since GRPO inherits the PPO principle~\citep{schulman2017proximal} that derives policy gradients at the sequence level, the intrinsic reward for a complete query-response pair is calculated as
\begin{equation}\label{eq:rint2}
\begin{aligned}
    R_{\text{int}}\Bigl([q,o_i]\Bigr) & = \sum\nolimits_{t=0}^{T-1}\gamma^tR_{\text{int}}\Bigl([q, o_{i,\le t}],o_{i,t+1}, [q,o_{i,\le t+1}]\Bigr) \\
    & = \sum\nolimits_{t=0}^{T-1}\gamma^t[\gamma d([q,o_{i,\le t+1}]) - d([q,o_{i,\le t}])] \\
    & = \gamma^Td([q,o_{i,\le T}]) - d(q)\\
    & = \gamma^Td([q,o_i]),
\end{aligned}
\end{equation}
where $T$ denotes the terminal step, and $d([q,o_i])$ is the global diversity of response $o_i$, which can be calculated by metrics in Sec.~\ref{sec:metrics}.
The diversity of a constant query $q$ is zero, i.e., $d(q)=0$.

This diversity-incentivized intrinsic reward complements the traditional rule-based assessment reward $R(\cdot)$. 
By incorporating the diversity measure, we design a versatile evaluation system that values both correctness and solution diversity, yielding the new reward function $R'(\cdot)$ as 
\begin{equation}\label{eq:aug_reward}
    R'([q,o_i]) = R([q,o_i]) + \lambda\cdot R_{\text{int}}([q,o_i]),
\end{equation}
where $\lambda$ is the shaping ratio that balances between accuracy and diversity. 
Maximizing the intrinsic reward incentivizes the model to explore diverse reasoning pathways at the sequence level, facilitating the discovery of novel and effective solutions to complex problems.
Finally, we substitute the augmented reward function $r_i'\!=\!R'([q,o_i])$ for the original reward $r_i\!=\!R([q,o_i])$ in Eq.~(\ref{equ:adv}) to calculate the advantage function under the GRPO framework.



When including the intrinsic reward, we will transform the original MDP $M\!=\!(S,A,T,R,\gamma)$ to a new one $M'\!=\!(S,A,T,R',\gamma)$, where $R'\!=\!R+\lambda R_{\text{int}}$.
Since we are learning a policy for the transformed MDP $M'$ in the hope of using it in the original one $M$, it is essential to ensure that this transformation does not mislead the agent into learning suboptimal policies.
Theorem~\ref{theo:shaping} guarantees the optimal policy invariance when incorporating global diversity as an intrinsic reward, validating the effectiveness of our reward shaping mechanism.
Appendix~\ref{app:reward_shaping} presents the detailed proof.

\begin{theorem}[Optimal Policy Invariance]\label{theo:shaping}
    Let $M\!=\!(S,A,T,R,\gamma)$ denote the MDP for the LLM reasoning task.
    $d(\cdot)\!:S\mapsto \mathbb{R}$ is a real-valued function that computes the sequence-level diversity $d(s)$ of the state $s$ within a group of rollouts.
    We formulate $R_{\text{int}}(\cdot)\!:S\!\times\! A\!\times\! S\mapsto \mathbb{R}$ as an intrinsic reward function that is the difference between sentence diversities of two adjacent states, such that for all $s\in S, a\in A, s'\in S$, $R_{\text{int}}(s,a,s')=\gamma d(s')-d(s)$. 
    Then, with any constant balancing ratio $\lambda$, every optimal policy in the transformed MDP $M'\!=\!(S,A,T,R+\lambda R_{\text{int}},\gamma)$ will also be an optimal policy in $M$, and vice versa.
\end{theorem}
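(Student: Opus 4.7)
The plan is to follow the classical potential-based reward shaping argument of Ng, Harada, and Russell (1999), adapted to our scaled intrinsic reward. The key observation is that the scaling constant $\lambda$ can be absorbed into the potential: defining $\Phi(s) := \lambda d(s)$, the shaped reward takes the canonical telescoping form
\begin{equation*}
R'(s,a,s') = R(s,a,s') + \gamma \Phi(s') - \Phi(s).
\end{equation*}
Since $\Phi$ is an admissible potential function for any real-valued $d$, the standard invariance theorem applies.

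The first step is to verify the algebraic identity above by substituting the definition of $R_{\text{int}}$. Next, for an arbitrary (stationary) policy $\pi$, I would expand the state-value function in the shaped MDP as the discounted sum
\begin{equation*}
V^{\pi}_{M'}(s_0) = \mathbb{E}_{\pi}\!\left[\sum_{t=0}^{T-1} \gamma^{t}\Bigl(R(s_t,a_t,s_{t+1}) + \gamma\Phi(s_{t+1}) - \Phi(s_t)\Bigr)\,\Big|\, s_0\right].
\end{equation*}
Splitting the sum and telescoping the shaping terms yields $\sum_{t=0}^{T-1}\gamma^{t}(\gamma\Phi(s_{t+1}) - \Phi(s_t)) = \gamma^{T}\Phi(s_T) - \Phi(s_0)$. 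Adopting the usual convention $\Phi(s_T)=0$ at terminal states (consistent with the paper's observation that $d(q)=0$ at the initial/terminal query boundary and the finite-horizon nature of LLM rollouts), the identity collapses to
\begin{equation*}
V^{\pi}_{M'}(s) = V^{\pi}_{M}(s) - \Phi(s), \qquad Q^{\pi}_{M'}(s,a) = Q^{\pi}_{M}(s,a) - \Phi(s),
\end{equation*}
uniformly in $\pi$.

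The final step is the argmax argument: because the offset $-\Phi(s)=-\lambda d(s)$ depends only on the state and not on the action or the policy, we have
\begin{equation*}
\arg\max_{a\in A} Q^{*}_{M'}(s,a) = \arg\max_{a\in A} Q^{*}_{M}(s,a)
\end{equation*}
for every $s\in S$. Thus an optimal policy in $M'$ is optimal in $M$, and vice versa, establishing the theorem for any constant $\lambda$.

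The main obstacle I anticipate is the careful handling of the terminal boundary condition on $\Phi$, since the telescoping leaves a residual $\gamma^{T}\Phi(s_T)$ that must vanish (or be policy-independent) for the offset between $V^{\pi}_{M'}$ and $V^{\pi}_{M}$ to be the same across all $\pi$. In the LLM reasoning setting this is natural because responses terminate upon emitting an end-of-sequence token, and the paper already implicitly sets the potential to zero at the query boundary; I would state this as an explicit assumption in the proof so that the telescoping argument goes through cleanly. A minor secondary issue is confirming that the policies compared are measurable with respect to the same state representation $[q, o_{i,\le t}]$, but this is immediate from the construction since $M$ and $M'$ share the same $(S,A,T,\gamma)$ and differ only in the reward.
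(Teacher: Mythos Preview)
Your proposal is correct and follows the same potential-based shaping logic, but the paper executes it differently. Rather than expanding $V^\pi_{M'}$ as a trajectory sum and telescoping, the paper works directly with the Bellman optimality equation: it subtracts $\lambda d(s)$ from both sides of $Q_M^*(s,a)=\mathbb{E}_{s'}[R+\gamma\max_{a'}Q_M^*(s',a')]$, regroups to exhibit $R'=R+\lambda R_{\text{int}}$, and observes that $\hat Q(s,a):=Q_M^*(s,a)-\lambda d(s)$ satisfies the Bellman optimality equation for $M'$. Uniqueness of the fixed point then gives $Q_{M'}^*(s,a)=Q_M^*(s,a)-\lambda d(s)$, and the argmax step is identical to yours.

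The practical difference is that the Bellman-equation route never touches a terminal boundary, so the residual $\gamma^T\Phi(s_T)$ that you flag simply does not appear. Your proposed fix is slightly off: the paper's $d(q)=0$ refers to the \emph{initial} state (the bare query), not the terminal one; the terminal state $[q,o_i]$ carries exactly the nonzero diversity that drives the intrinsic reward, so you cannot set $\Phi(s_T)=0$ there without changing $R_{\text{int}}$ at the last step. The clean way to reconcile your telescoping with the paper's setting is to embed the episodic task in an infinite-horizon MDP with an absorbing post-terminal state; the per-step shaping reward $(\gamma-1)\lambda d(s_{\text{abs}})$ then sums geometrically to $-\lambda d(s_{\text{abs}})$ and restores the uniform offset. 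The paper's fixed-point argument achieves this implicitly and is therefore a bit more economical for the statement as written.
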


\textcolor{my_blue}{The ingenuity of our reward-shaping design lies in that by setting the shaping reward as the difference between the diversities of adjacent states as in Eq.~(\ref{eq:rint}), the intrinsic reward for a complete query-response pair is derived as the diversity of the final response as in Eq.~(\ref{eq:rint2}). 
This elegant formulation avoids the need to calculate the diversity of any intermediate sentences, which saves a significant amount of computation while ensuring optimal policy invariance.}

\subsection{Mitigating Reward Hacking}
\label{subsec:reward_hacking}

Including an additional shaping reward could increase the risk of reward hacking, a phenomenon where an RL agent exploits flaws or ambiguities in the reward function to achieve high rewards without genuinely solving the intended task~\citep{pan2022effects}.
This is particularly concerning in language models, where the complex nature of reasoning tasks makes reward functions susceptible to biased exploitation~\citep{liu2025rrm}.
Although DIVER preserves optimal policy invariance after reward shaping, the model may still over-exploit intrinsic rewards and neglect the primary objective during training.
Since the primary reward for reasoning correctness is sparse and difficult to attain, obtaining the reward for diversity is considerably easier, especially when addressing hard problems.
To this end, we design simple heuristics to mitigate the potential risk of reward hacking as follows.
Ablation study in Sec.~\ref{sec:analyses} verifies the successful mitigation of possible reward hacking.

\paragraph{Balanced Shaping.}
We clip the diversity reward to be $r_i^{\text{int}}\!=\!\text{clip}(r_i^{\text{int}};0,\sigma)$, where $r_i^{\text{int}}\!=\!R_{\text{int}}([q,o_i])$ and $\sigma$ is a predetermined upper bound that prevents the model from excessively exploiting the shaping reward.
Moreover, we gradually reduce the balancing ratio $\lambda$ during training.
Akin to the classic exploration-exploitation tradeoff in RL philosophy~\citep{sutton2018reinforcement}, we prefer exploring diverse solutions early and tend to exploit accumulated knowledge later.

\paragraph{Conditional Shaping.}
We only include the shaping reward to correct responses within the group as $r_i'\!=\!r_i+\lambda \cdot r_{\text{int}}^i\cdot I(r_i)$, where $I(r_i)$ is an indicator function that equals 1 if the response is correct and 0 otherwise. 
This conditional shaping ensures that the diversity incentive only rewards genuinely correct solutions, preventing the model from trading off correctness for diversity.
The design principle effectively aligns shaping rewards with the true objective, addressing potential reward hacking concerns while promoting valuable, diversified exploration across the solution space.

\section{Experiments}

We comprehensively evaluate and analyze our method to answer the following research questions: i) Can DIVER improve performance while maintaining effective global exploration and reliably extending to other models? ii) Can DIVER achieve an effective and broader exploration scope that unlocks enhanced reasoning capacity? iii) What is the appropriate configuration of DIVER for balancing diversity, reward stability, and exploration horizons?


\subsection{Experimental Settings}
\label{sec:ex_setup}
\paragraph{Datasets and Evaluation.} Our training data is a subset of OpenR1-Math-220k~\citep{openr1}, with prompts collected from NuminaMath 1.5~\citep{numina_math_datasets} following the LUFFY~\citep{yan2025learning}.
We evaluate on six mathematical reasoning benchmarks: AIME24/25, AMC~\citep{li2024numinamath}, Minerva~\citep{dataset_minerva}, OlympiadBench~\citep{dataset_olympiad}, and MATH-500~\citep{dataset_math}. Main results report Avg@32 for the smaller test sets (AIME24/25, AMC), and Pass@1 for others. For cross-domain, we test on ARC-c~\citep{arc}, GPQA-diamond (GPQA$^*$)~\citep{gpqa}, and MMLU-Pro~\citep{mmlu_pro}. 
\paragraph{Baselines and Training.} We compare DIVER against two categories of baselines: 1) Established RLVR methods: \texttt{SimpleRL-Zoo}\citep{simplerlzoo}, \texttt{OpenReasoner-Zero}\citep{orz}, and \texttt{PRIME-Zero}~\citep{prime}. 2) Our reproduction of exploration RL methods: \texttt{GRPO w/ Clip-higher}\citep{yu2025dapo}, which modifies clip ratio to encourage exploration; \texttt{Entropy-RL}\citep{cui2025entropy}, which addresses policy entropy collapse through covariance-based techniques; and \texttt{Pass@k Training}~\citep{chen2025pass}, which uses Pass@k as the reward to adaptively balance exploration and exploitation.
We set $\beta=0$ to remove the KL loss term and use 0.28 for higher clip following GPPO w/ Clip-higher.
Sample batch size is 128, update batch size is 32, with 8 rollouts per prompt.
All experimental details are documented in Appendix~\ref{appendix:ex_details}.

\subsection{Main Results}\label{sec:main_results}
\begin{table*}[t]
\centering
\caption{Performance comparison across in-domain and out-of-domain tasks based on \texttt{Qwen2.5-Math-7B}. Best results in \textbf{bold} and second best \underline{underlined}. DIVER-TD and DIVER-ED represent our approach implemented with Textual Diversity and Equational Diversity, respectively. \textcolor{my_blue}{DIVER-MIX combines both metrics during training to achieve optimal diversity.}}
\label{tab:main_results}
\setlength{\tabcolsep}{2.5pt}  
\renewcommand{\arraystretch}{1.3} 
\resizebox{\textwidth}{!}{%
\begin{tabular}{lccccc>{\columncolor{yellow!12}}c|ccc>{\columncolor{cyan!10}}c}
\toprule
\multirow{2}{*}{\textbf{Model}} & \multicolumn{6}{c}{\textbf{In-Domain Performance}} & \multicolumn{4}{c}{\textbf{Out-of-Domain Performance}} \\
\cmidrule(lr){2-7} \cmidrule(lr){8-11}
 & \textbf{AIME 24/25} & \textbf{AMC} & \textbf{MATH-500} & \textbf{Minerva} & \textbf{Olympiad} & \textbf{Avg.} & \textbf{ARC-c} & \textbf{GPQA}$^{*}$ & \textbf{MMLU-Pro} & \textbf{Avg.} \\
\midrule
Qwen2.5-Math-7B     
  & 11.8/6.3    & 43.1 & 56.8 & 16.9  & 25.4 & 26.7      & 38.1 & 12.2 & 31.5 & 27.3     \\
\midrule
\normalrow
\multicolumn{11}{c}{Previous RLVR methods} \\
\midrule
SimpleRL-Zoo             
  & \textbf{25.2}/12.0    & 57.6 & 76.2 & 27.2 & 41.0 & 39.9      & 22.0 & 20.4 & 32.5 & 25.0    \\
OpenReasoner-Zero                
  & 16.5/15.0   & 52.1 & 82.4 & \underline{33.1} & \textbf{47.1} & 41.0      & 66.2 & 29.8 & \textbf{58.7} & 51.6 \\
PRIME-Zero                   
  & 17.0/12.8   & 54.0 & 81.4 & \textbf{39.0 }& 40.3 & 40.7      & 73.3 & 18.2 & 32.7 & 41.4     \\
\midrule
\normalrow
\multicolumn{11}{c}{Exploration RL Methods} \\
\midrule
GRPO w/ Clip-higher & 18.9/\underline{16.4} & 57.3 & 81.2 & 28.7 & 41.5 & 40.7 & {82.1} & 36.2 & 47.2 & 55.2     \\
Entropy-RL & 23.6/12.8 & 58.4 & 82.8 & 31.6 & 41.5 & 41.8 & 80.7 & 38.8 & 48.4 & 56.0     \\
\textcolor{my_blue}{Pass@k Training} & \textcolor{my_blue}{20.9/15.7} & \textcolor{my_blue}{52.3} & \textcolor{my_blue}{83.8} & \textcolor{my_blue}{32.7} & \textcolor{my_blue}{43.8} & \textcolor{my_blue}{41.5} & \textcolor{my_blue}{79.3} & \textcolor{my_blue}{37.8} & \textcolor{my_blue}{49.0} & \textcolor{my_blue}{55.3} \\
\midrule
\rowhighlight
\multicolumn{11}{c}{\textbf{Our Methods}} \\
\midrule
DIVER-TD                                        
   &22.5/\textbf{16.9} & 59.4 & 82.2 & 27.9 & 44.7 & \underline{42.3}     & \underline{83.4} & \textbf{42.3} & 49.5 & \underline{58.4}     \\
DIVER-ED                                        
  & 20.9/15.7 & \underline{59.7} & \underline{84.0} & 31.6 & \underline{46.1} & \underline{43.0} & \underline{83.4} & 36.2 & {49.9} & 56.5 \\
\textcolor{my_blue}{DIVER-MIX} & \textcolor{my_blue}{\underline{23.8}/\underline{16.4}} & \textcolor{my_blue}{\textbf{60.9}} & \textcolor{my_blue}{\textbf{84.4}} & \textcolor{my_blue}{29.4} & \textcolor{my_blue}{44.0} & \textcolor{my_blue}{\textbf{43.1}} & \textcolor{my_blue}{\textbf{84.1}} & \textcolor{my_blue}{\underline{41.3}} & \textcolor{my_blue}{\underline{51.0}} & \textcolor{my_blue}{\textbf{58.8}} \\

\bottomrule
\end{tabular}
}
\end{table*}
\paragraph{Reasoning Performance on Qwen2.5-Math-7B.}
Table~\ref{tab:main_results} illustrates DIVER's evaluation results compared to established RLVR methods (SimpleRL-Zoo, OpenReasoner-Zero, and PRIME-Zero). 
All implementations based on Qwen2.5-Math-7B. DIVER with Textual Diversity (TD) and Equational Diversity (ED) achieves average scores of \textbf{42.3} and \textbf{43.0} on six mathematical benchmarks, outperforming OpenReasoner-Zero by \textbf{+2.0} points. On challenging out-of-domain tasks, DIVER reaches \textbf{58.4} average score, surpassing OpenReasoner-Zero by \textbf{+6.8} points, with notable gains on ARC-c (\textbf{+10.1}) and GPQA (\textbf{+12.5}). These results confirm that encouraging diverse reasoning paths at the sequence level enhances model generalization significantly.

\paragraph{Comparison with  Exploration RL Methods.}
We evaluate DIVER against representative exploration RLVR methods: GPRO w/ Clip-higher (\textit{undirected exploration}), Entropy-RL (\textit{action-level exploration}), and Pass@k Training (\textit{within-group bootstrap sampling}). Experimental results demonstrate that DIVER outperforms the best exploration method, Entropy-RL, by \textbf{+1.2} points on average across in-domain tasks. This improvement is particularly pronounced on challenging benchmarks, with a \textbf{+4.6} point advantage on OlympiadBench. Notably, on out-of-domain tasks, both DIVER and the global exploration approach Pass@k Training exhibit superior generalization compared to local exploration techniques. DIVER surpasses the best local exploration method, Entropy-RL, by \textbf{+2.4} points on average. This advantage is especially evident on benchmarks that assess conversational capabilities, such as GPQA, where DIVER outperforms Entropy-RL by \textbf{+3.5} points. The relatively weaker performance of GPRO w/ Clip-higher indicates that merely increasing constraints without strategic direction fails to fundamentally enhance model generalization capabilities.

\paragraph{Training Dynamics of Exploration Methods.}
\begin{figure}[t]
    \centering
    \includegraphics[width=1\linewidth]{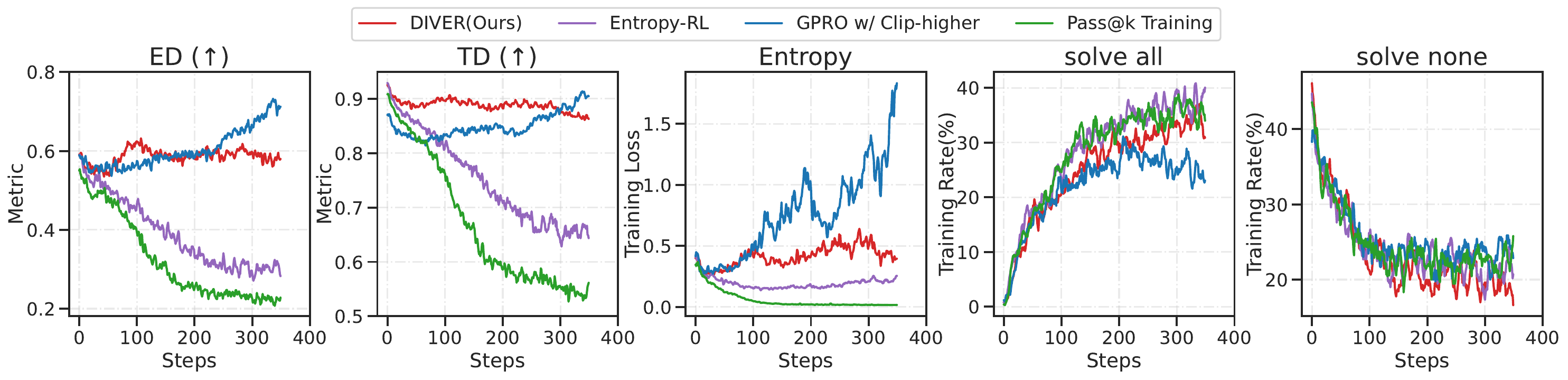}
    \caption{Training dynamics comparison with other exploration method across different metrics. $\uparrow$ indicates metrics where higher values are more diverse for ED and TD.
    }
    \label{fig:metric}
    \vspace{-5pt}
\end{figure}
Figure.~\ref{fig:metric} illustrates the various metrics of DIVER compared to other exploration methods. 
We aim to maintain sufficient exploration (high ED and TD) while preventing excessive entropy growth that may lead to model collapse. 
With this logic in mind, we observe that Pass@k Training and Entropy-RL diversity metrics decline over time, indicating reduced exploration, while GPRO w/ Clip-higher maintains diversity but experiences problematic entropy increases later.
In contrast, DIVER achieves optimal balance with high diversity and consistently reasonable entropy levels, enabling controlled exploration without excess randomness.
Notably, DIVER's ``solve all" rate grows more gradually, but its lower ``solve none" rate in training demonstrates effective exploration without compromising solution quality.

\paragraph{Extending DIVER to Different Models.}
We further explore DIVER's adaptability across various language models, including \textit{small, weak} or \textit{different architecture} models. As shown in Figure.\ref{fig:model_scacle} and Table~\ref{tab:main_other_models}, DIVER maintains effectiveness across different model backbones including Qwen2.5-Math-1.5B, Qwen2.5-7B-Base, and LLaMA-3.1-8B-Insturct. DIVER consistently outperforms baselines, improving over GRPO w/ Clip-higher by +1.7, +1.7, and +1.5 points on in-domain tasks respectively, with even larger gains of +1.5, +1.3, and +1.9 points on out-of-domain tasks. \textcolor{my_blue}{Additionally, experiments with DeepSeek-R1-Distill-Qwen-7B, a model generating longer responses (2500-3500 tokens), show DIVER outperforming GPRO w/ Clip-higher by +1.6 points in-domain and +3.4 points out-of-domain, demonstrating its effectiveness with longer reasoning horizons.}

\begin{wrapfigure}{t}{0.43\textwidth}
  \centering 
  \vspace{10pt}
  \includegraphics[width=\linewidth]{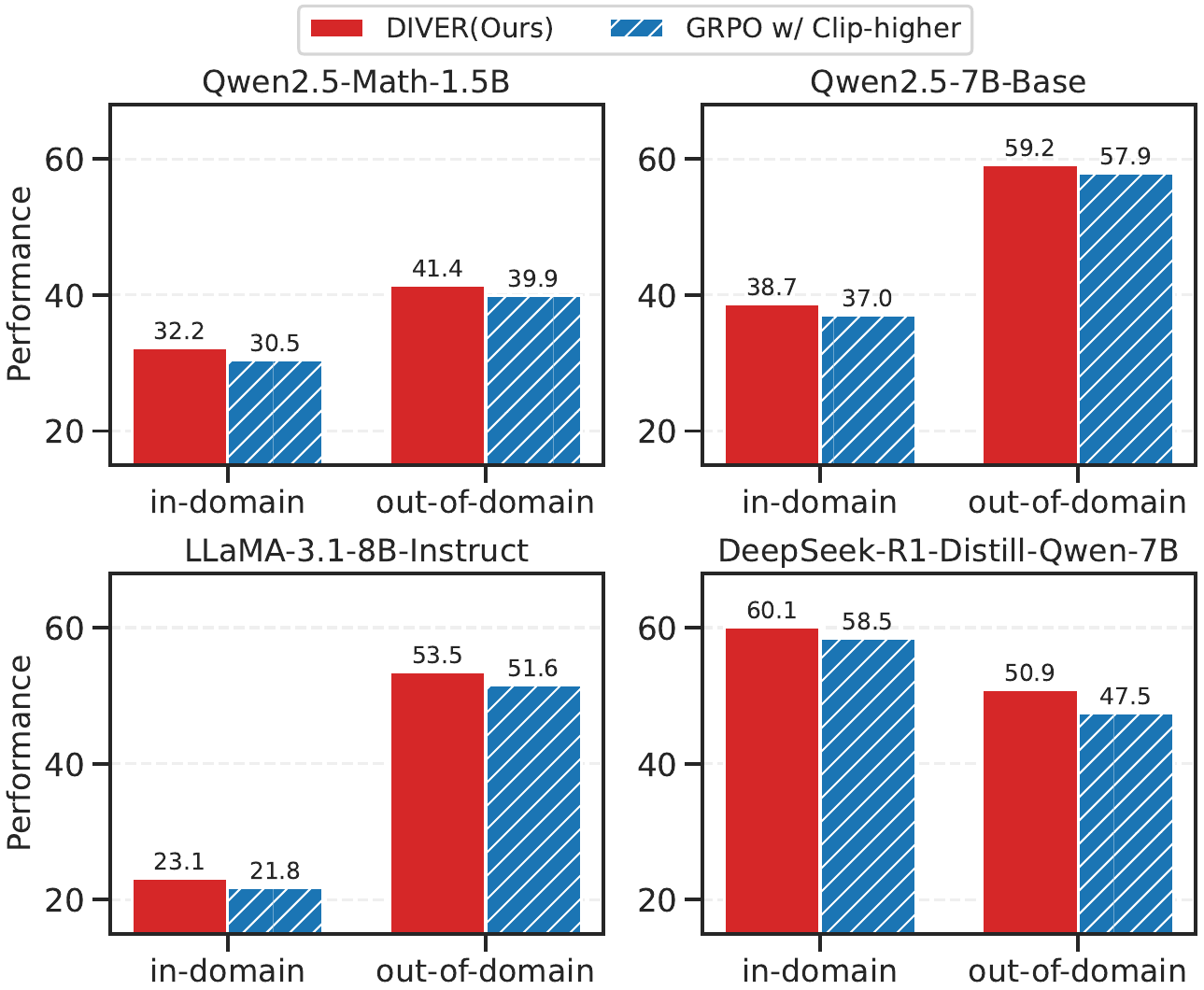}
  \caption{\textcolor{my_blue}{Average scores across in-domain and out-of-domain tasks with different models. Complete results in Table~\ref{tab:main_other_models}.}}
  \label{fig:model_scacle}
  \vspace{-5pt}
\end{wrapfigure}

\subsection{Exploration Scope and Reasoning Capacity}
\label{sec:passk}
\begin{figure}[t]
    \vspace{-10pt}
    \centering
    \includegraphics[width=0.98\linewidth]{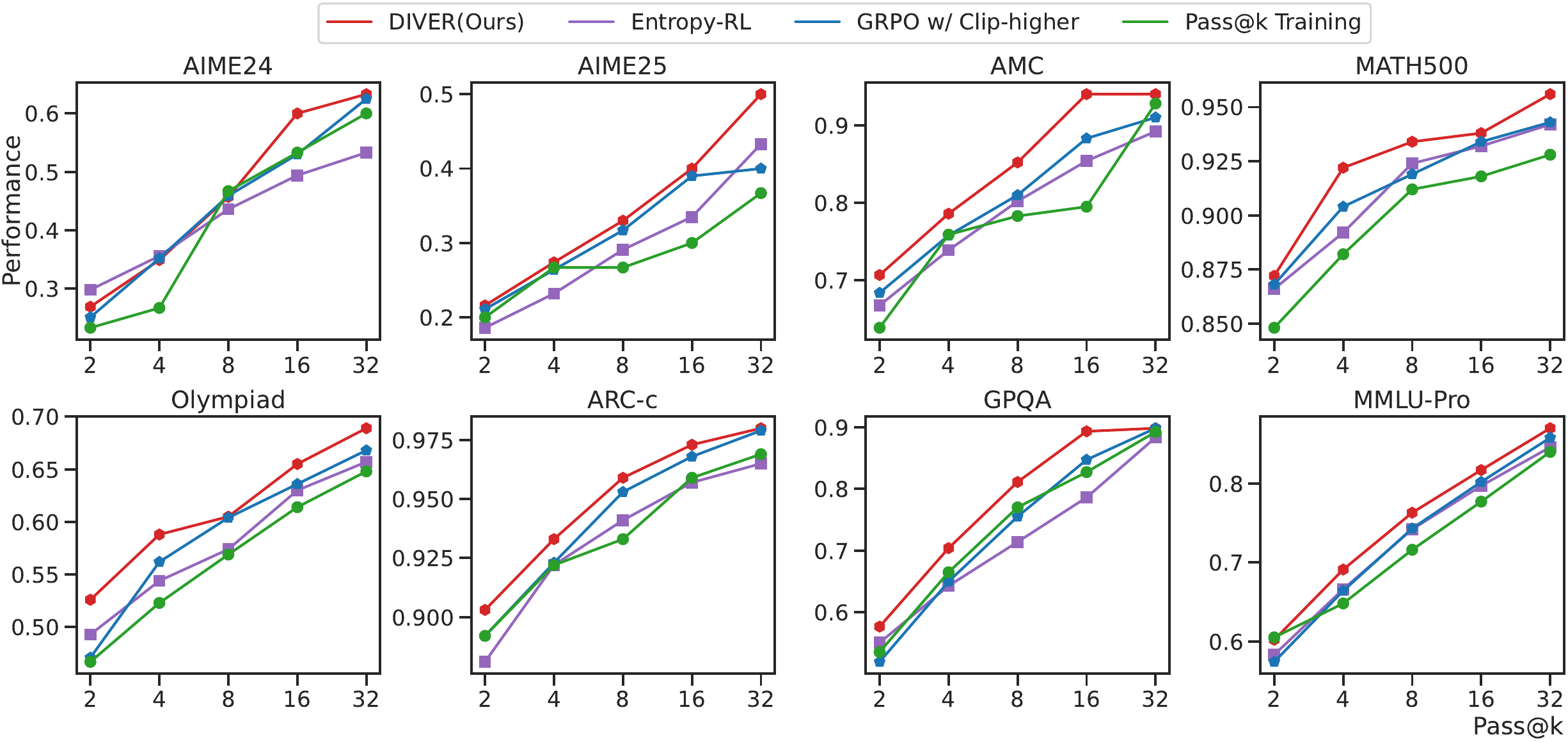}
    \caption{Comparison of different Pass@k performance across in-domain and out-of-domain benchmarks based on \texttt{Qwen2.5-Math-7B}. Complete results are available in Table~\ref{tab:pass@k}.}
    \label{fig:passk}
\end{figure}

We evaluate Pass@k, a metric for model reasoning upper-bounds~\citep{cheng2025reasoning}, for $k \in \{2,4,8,16,32\}$ across all tasks (Figure~\ref{fig:passk}, Table~\ref{tab:pass@k}). DIVER consistently outperforms baselines across nearly all benchmarks.
Moreover, Pass@32 most closely reveals reasoning exploration scope, where DIVER achieves superior performance on challenging benchmarks with \textbf{50.0} on AIME25 (\textbf{+6.7} over Entropy-RL) and \textbf{68.9} points on OlympiadBench (\textbf{+2.1} over GRPO w/ Clip-higher). Overall, compared to existing exploration methods, DIVER uniquely enhances Pass@k capability without compromising Pass@1 performance.
To visually demonstrate our conclusion, we examine case studies of multi-attempt rollouts (Appendix \ref{appendix:case}). 
DIVER generates diverse yet coherent reasoning paths leading to correct solutions, while Entropy-RL explores at specific decision points, and GRPO w/ Clip-higher and Pass@k Training exhibit wide but unproductive exploration, all leading to incorrect answers. These results confirm DIVER's superior exploration scope unlocks higher reasoning capacity.

\textcolor{my_blue}{To address concerns about the scale of Pass@k, we extend our experiments to k=128, 256, 512, and 1024 across AIME24, AIME25, and AMC benchmarks, including the base model Qwen2.5-Math-7B and all baselines. The DIVER demonstrates consistent superiority of DIVER (table~\ref{tab:larger_pass@k}) across all settings: 1) DIVER achieves the highest Pass@k scores across nearly all benchmarks and k values, reaching 86.7\% on AIME24 and 100\% on AMC at Pass@1024. 2) The performance gap increases with larger k, highlighting DIVER's superior ability to generate diverse correct solutions. On AIME24, DIVER outperforms the second-best baseline by 6.7 points at Pass@1024. 3)All RL methods substantially improve over the base model, but DIVER consistently achieves the best results, confirming that diversity-driven exploration provides significant advantages.}



\subsection{Analysis}\label{sec:analyses}

\begin{wrapfigure}{r}{0.52\textwidth}
  \centering 
  \vspace{-15pt}
  \includegraphics[width=\linewidth]{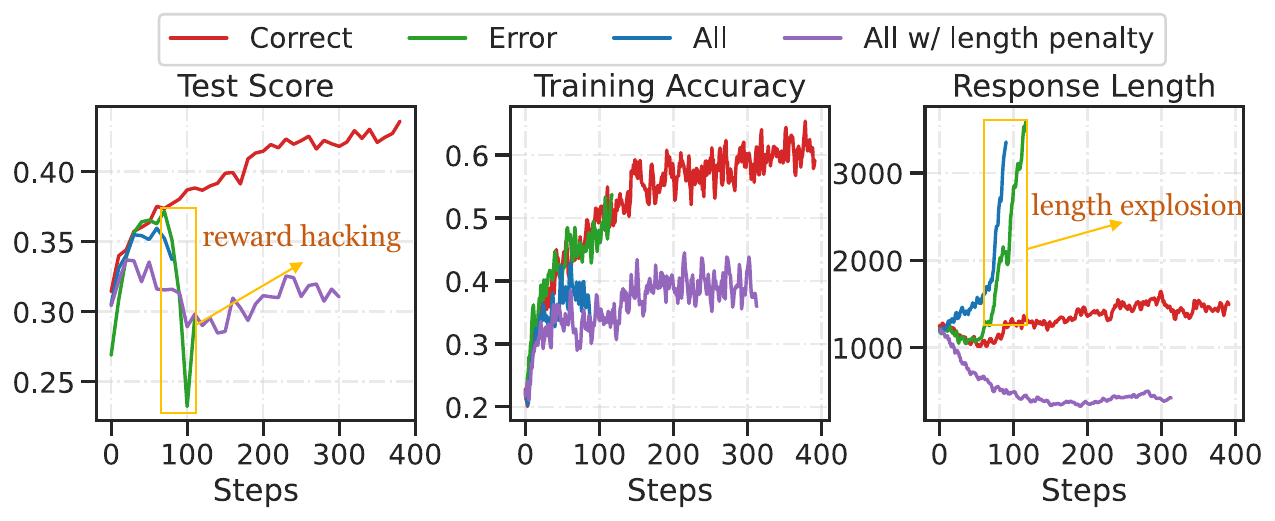}
  \caption{DIVER with rewarding diversity in correct, all, error, and all (w/ length penalty) responses.
  }
  \vspace{-6pt}
  \label{fig:hacking}
\end{wrapfigure}
\paragraph{Mitigating Reward Hacking.}

We compare four shaping strategies: applying diversity rewards to (1) correct responses, (2) all responses, (3) errors only, and (4) all responses with a length penalty.
As shown in Figure~\ref{fig:hacking}, rewarding diversity on incorrect or all responses severely degrades test performance, with response lengths exploding as the model exploits long rollouts to gain higher diversity bonuses. Introducing a length penalty mitigates length explosion but still results in poor accuracy. In contrast, the \textit{conditional shaping} strategy, which applies diversity rewards only to correct responses (red line), effectively constrains exploration to valid solutions and mitigates reward hacking.

\begin{wrapfigure}{r}{0.52\textwidth}
  \centering 
  \vspace{-1.5em}
  \includegraphics[width=\linewidth]{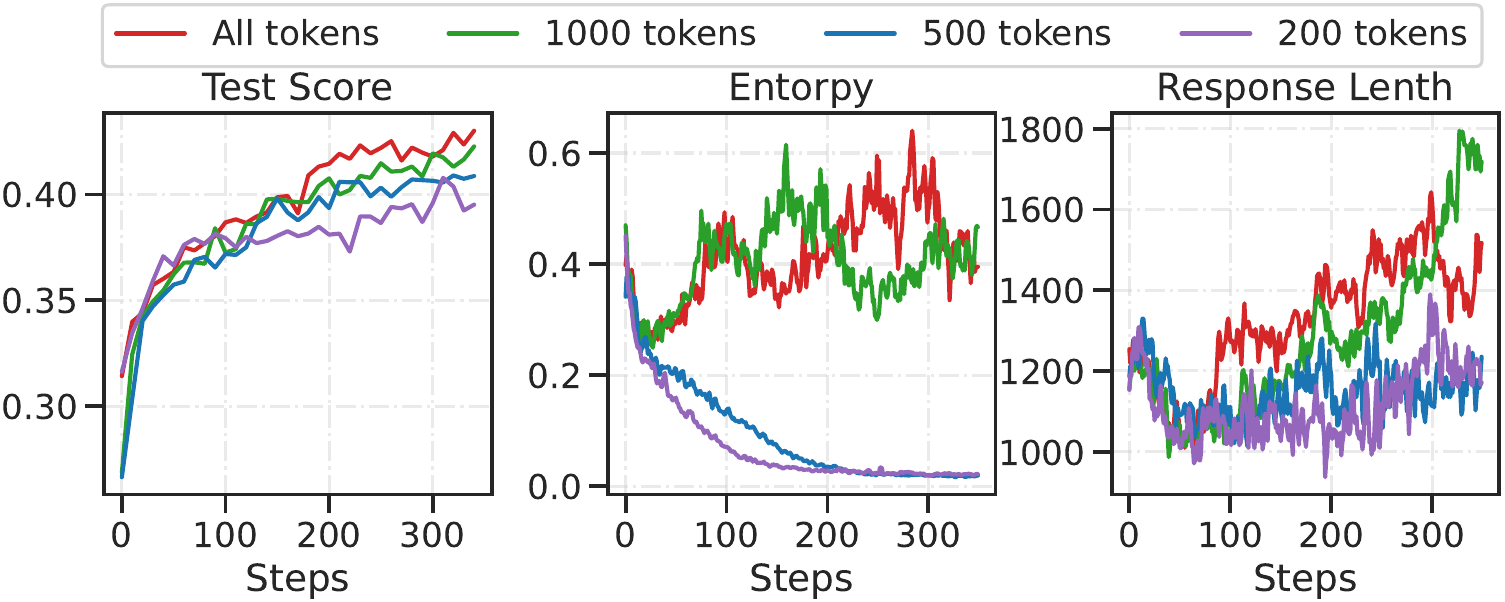}
  \caption{DIVER with varying horizon constraints. A longer horizon allows for higher global diversity.
  }
  \label{fig:token_analyze}
  \vspace{-1em}
\end{wrapfigure}

\paragraph{Longer Horizons Improve Performance.}
To investigate suitable exploration horizons (i.e., range of text for diversity calculation) for reasoning, we evaluate diversity metrics across different token horizons (i.e., the first 200, 500, 1000 tokens of the trajectory) versus complete responses. Figure \ref{fig:token_analyze} shows full responses maximize performance, while shorter horizons significantly reduce both entropy and performance. The consistent improvement with increasing horizon length confirms the reasoning benefits from global sequence-level diversity.
\begin{wrapfigure}{ht}{0.43\textwidth}
  \centering 
  \vspace{-15pt}
  \includegraphics[width=\linewidth]{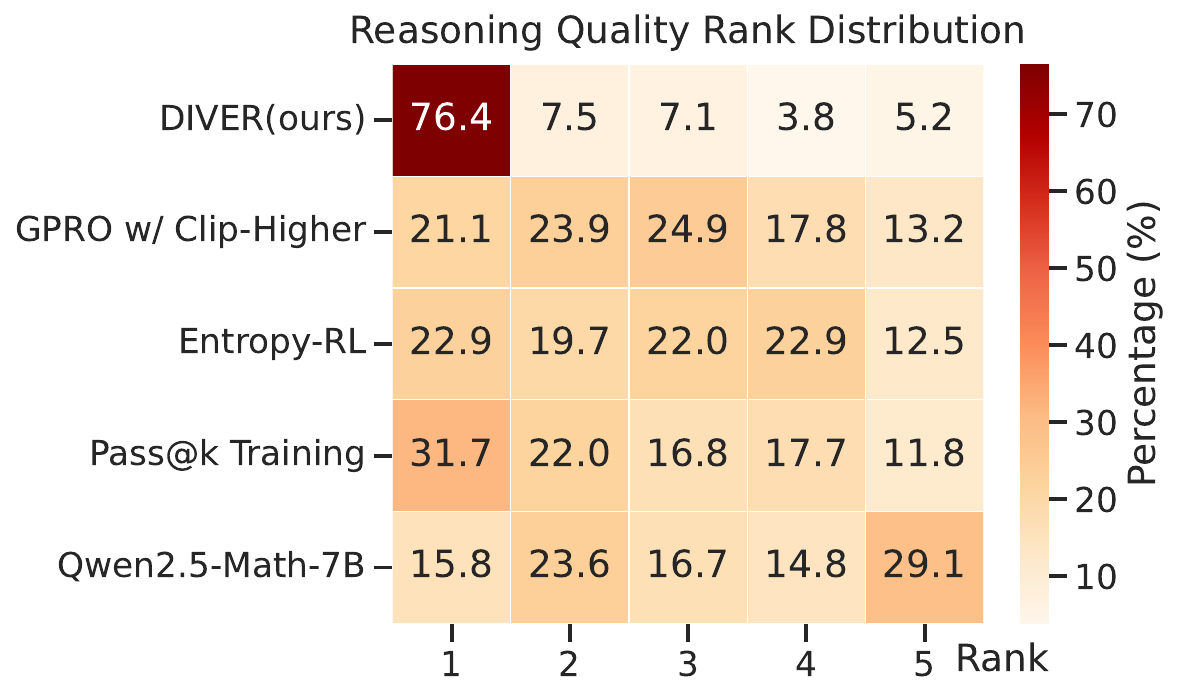}
  \caption{\textcolor{my_blue}{Average Ranking reasoning quality across six mathematical benchmarks as judged by DeepSeek-V3.2-Exp. Complete results are available in Table~\ref{tab:llm_judge_rank}.
  }}
  \vspace{-10pt}
  \label{fig:llm_judge}
\end{wrapfigure}
\paragraph{Diversity Enhances Reasoning Quality}\textcolor{my_blue}{To validate whether our approach achieves higher quality and more meaningful reasoning results, we evaluate all test responses using \texttt{deepseek-ai/DeepSeek-V3.2-Exp} as a judge model, ranking responses generated by the base model, baselines, and DIVER for each prompt. The evaluation comprehensively assesses four key dimensions: \texttt{Completeness}, \texttt{Logical Consistency}, \texttt{Relevance}, and \texttt{Efficiency}. 
Fig.~\ref{fig:llm_judge} demonstrates DIVER solutions rank best (Rank 1) in 76.4\% of cases, more than double the next best method. This confirms our diversity measures capture meaningful improvements in reasoning quality, not merely surface-level variations.}

\section{Conclusions, Limitations, and Future Work}
We presented DIVER, an innovative approach that enhances LLM reasoning from a new perspective of diversity. 
In contrast to existing methods that focus primarily on local token-level diversity, we examine the role of global sequence-level diversity in incentivizing deep exploration, revealing a positive correlation with reasoning capacity. 
Evaluations showed DIVER achieves consistently higher reasoning capabilities on in-domain tasks and stronger generalization on out-of-domain tasks.
DIVER considers single-turn RLVR, while multi-turn settings hold greater promise for unlocking agent RL's potential in real-world applications.
We leave it as future work.
Another direction is to employ more powerful diversity metrics, such as using LLM-as-a-judge~\citep{gu2024survey}.

\clearpage

\section*{Acknowledgements}
We extend our gratitude to all the reviewers for their valuable feedback and suggestions. This work was completed during an internship at the Shanghai Artificial Intelligence Laboratory.
It was supported in part by the National Natural Science Foundation of China (No. 62376122), and in part by the National Key Research and Development Program of China (No. 2025YFA1016904).

\section*{Ethics Statement}
We are not aware of any major ethical concerns arising from our work. Our study is conducted entirely within the mathematics domain, using only publicly available models and datasets for training and evaluation. No human subjects were involved, and our research does not introduce sensitive or potentially harmful insights.
\section*{Reproducibility Statement}
We provide the experimental setups in Sec.\ref{sec:ex_setup}, with with further details in Appendix~\ref{appendix:ex_details}. The code is available at supplementary material, and will be released on GitHub in the future. Additionally, we will make the weights of the DIVER models publicly available via platforms such as Hugging Face Community upon acceptance.

\bibliography{iclr2026_conference}
\bibliographystyle{iclr2026_conference}

\clearpage
\appendix 
\part*{Appendix}
\addcontentsline{toc}{part}{Appendix}

\startcontents[appendix]
\printcontents[appendix]{}{1}{}

\newpage

\section{Optimal Policy Invariance in DIVER}\label{app:reward_shaping}


\textcolor{my_blue}{Following the classical reward shaping study~\citep{ng1999policy}, we give the proof of Theorem~\ref{theo:shaping}, which guarantees the optimal policy invariance when incorporating global diversity as an intrinsic reward.
}

\setcounter{theorem}{0}
\begin{theorem}[Optimal Policy Invariance]
    Let $M\!=\!(S,A,T,R,\gamma)$ denote the MDP for the LLM reasoning task.
    $d(\cdot)\!:S\mapsto \mathbb{R}$ is a real-valued function that computes the sentence-level diversity $d(s)$ of the state $s$ within a group of rollouts.
    We formulate $R_{\text{int}}(\cdot)\!:S\!\times\! A\!\times\! S\mapsto \mathbb{R}$ as an intrinsic reward function that is the difference between sentence diversities of two adjacent states, such that for all $s\in S, a\in A, s'\in S$, $R_{\text{int}}(s,a,s')=\gamma d(s')-d(s)$. 
    Then, with any constant balancing ratio $\lambda$, every optimal policy in the transformed MDP $M'\!=\!(S,A,T,R+\lambda R_{\text{int}},\gamma)$ will also be an optimal policy in $M$, and vice versa.
\end{theorem}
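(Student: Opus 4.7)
The plan is to follow the classical Ng--Harada--Russell argument for potential-based reward shaping, adapted to the present notation where $d$ serves as the potential function and $\lambda$ is a constant scalar multiplier. The central idea is that a shaping term of the form $\lambda(\gamma d(s') - d(s))$ telescopes along any trajectory, and therefore only changes Q-values by a state-dependent offset that is independent of the action. First I would write the Bellman optimality equations for both $M$ and $M'$ side by side, noting that the reward in $M'$ is $R(s,a) + \lambda(\gamma d(s') - d(s))$.

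The key step is to guess the shifted candidate $\widetilde{Q}(s,a) := Q^{*}_{M}(s,a) - \lambda d(s)$ and verify that it satisfies the Bellman optimality equation for $M'$. Substituting $\widetilde{Q}(s',a') + \lambda d(s')$ for $Q^{*}_{M}(s',a')$ inside the expectation, the $\gamma \lambda d(s')$ piece combines with the $-\lambda d(s)$ offset and the shaping term $\lambda(\gamma d(s') - d(s))$ cancels cleanly. By uniqueness of the fixed point of the Bellman optimality operator (a $\gamma$-contraction on bounded Q-functions), this forces
\begin{equation}
Q^{*}_{M'}(s,a) = Q^{*}_{M}(s,a) - \lambda d(s).
\end{equation}

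Since the offset $-\lambda d(s)$ depends only on $s$, it drops out of every per-state action comparison, so $\arg\max_{a} Q^{*}_{M'}(s,a) = \arg\max_{a} Q^{*}_{M}(s,a)$ at every $s$. This yields invariance of the set of optimal greedy policies in both directions, which is the conclusion of the theorem. For completeness I would also spell out the stochastic-policy version by noting that the advantage function $A^{*}(s,a) = Q^{*}(s,a) - V^{*}(s)$ is literally identical in $M$ and $M'$, so any policy optimal in one MDP remains optimal in the other regardless of tie-breaking.

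The main obstacle is essentially careful bookkeeping of the telescoping identity inside an expectation over $T(\cdot\mid s,a)$, ensuring that the $\gamma$ factor lines up correctly with the next-state potential. A secondary subtlety worth flagging is the instantiation to the LLM-reasoning setting of Eq.~(\ref{grpo}), where states are prefixes $[q, o_{i,\le t}]$ and trajectories terminate at length $T$: one should verify that the per-step telescoping collapses to the sequence-level expression $\gamma^{T} d([q, o_i])$ already derived in Section~\ref{subsec:div_rs}, using $d(q) = 0$, so that the potential-based per-step formulation and the sequence-level intrinsic reward used in practice are consistent. With that consistency in hand, the theorem follows directly from the Q-function shift argument for any choice of the shaping coefficient $\lambda$.
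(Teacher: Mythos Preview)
Your proposal is correct and follows essentially the same approach as the paper: both use the Ng--Harada--Russell potential-based shaping argument, defining the shifted candidate $Q^{*}_{M}(s,a)-\lambda d(s)$, verifying it satisfies the Bellman optimality equation for $M'$, and concluding that the state-only offset leaves the per-state $\arg\max$ unchanged. Your additional remarks on the contraction/uniqueness step, advantage-function invariance, and consistency with the sequence-level telescoping formula are helpful elaborations but not required by the paper's proof, which handles the ``vice versa'' direction by simply swapping the roles of $M$ and $M'$ with shaping $-R_{\text{int}}$.
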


\begin{proof}
    
    For the original MDP $M$, we know that its optimal Q-function $Q_M^*$ satisfies the Bellman optimality equation~\citep{sutton2018reinforcement}:
    \begin{equation}
        Q_M^*(s,a)=\mathbb{E}_{s'}\left[R(s,a,s')+\gamma\max_{a'\in A}Q_M^*(s',a')\right].
    \end{equation}
    With some simple algebraic manipulation, we can get:
    \begin{equation}
        Q_M^*(s,a) - \lambda d(s) = \mathbb{E}_{s'}\left[R(s,a,s')+\lambda\Bigl(\gamma d(s')-d(s)\Bigr)+\gamma\max_{a'\in A}\Bigl(Q_M^*(s',a')-\lambda d(s')\Bigr)\right].
    \end{equation}

    If we now define $\hat{Q}_{M'}(s,a)\triangleq Q_M^*(s,a)-\lambda d(s)$ and substitute that and $R_{\text{int}}(s,a,s')=\gamma d(s')-d(s)$ into the previous equation, we can get:
    \begin{equation}
    \begin{aligned}
        \hat{Q}_{M'}(s,a) & = \mathbb{E}_{s'}\left[R(s,a,s')+\lambda R_{\text{int}}(s,a,s')+\gamma\max_{a'\in A}\hat{Q}_{M'}(s',a')\right] \\
        & = \mathbb{E}_{s'}\left[R'(s,a,s')+\gamma\max_{a'\in A}\hat{Q}_{M'}(s',a')\right],
    \end{aligned}
    \end{equation}
    which is exactly the Bellman optimality equation for the transformed MDP $M'$, where $R'=R+\lambda R_{\text{int}}$ is the reward function for $M'$.
    Thus, $Q_{M'}^*(s,a)=\hat{Q}_{M'}(s,a)=Q_M^*(s,a)-\lambda d(s)$, and the optimal policy for $M'$ therefore satisfies:
    \begin{equation}
    \begin{aligned}
        \pi_{M'}^*(s) & = \arg\max_{a\in A}Q_{M'}^*(s,a) \\
        & = \arg\max_{a\in A}\Bigl[Q_M^*(s,a) - \lambda d(s)\Bigr] \\
        & = \arg\max_{a\in A}Q_M^*(s,a),
    \end{aligned}
    \end{equation}
    and is therefore also optimal in $M$.
    To show every optimal policy in $M$ is also optimal in $M'$, simply apply the same proof with the roles of $M$ and $M'$ interchanged (and using $-R_{\text{int}}$ as the intrinsic reward).
    This completes the proof.
\end{proof}

\clearpage

\section{Experimental Details}

\subsection{Diversity based Filtering}
\label{app:filter}
To compare the model training process using rollouts with different diversity levels, we conduct an illustrative experiment to filter GRPO rollouts into high-diversity and low-diversity subsets to train separate models. Specifically, the policy generates $2*G$ responses for each query. For high-diversity scheme, $G$ responses of highest diversity metrics (TD and ED) are filtered as the candidate responses ${o_1, ..., o_G}$ for GRPO training. For low-diversity scheme, responses of lowest diversity metrics are filtered. Both approaches maintain identical computational costs, as they use the same number of generated responses and differ only in the selection criteria applied during filtering.

\subsection{Detailed settings}
\label{appendix:ex_details}

\paragraph{Datasets} Our training data is a subset of OpenR1-Math-220k~\citep{openr1}, with prompts collected from NuminaMath 1.5~\citep{numina_math_datasets}. We follow the LUFFY~\citep{yan2025learning}\footnote{\url{https://huggingface.co/datasets/Elliott/Openr1-Math-46k-8192}} dataset construction methodology but differ in that we do not incorporate off-policy reasoning traces, as ours is a purely on-policy approach.

\paragraph{Evaluation} We evaluate our method on six mathematical reasoning benchmarks: AIME 2024\footnote{\url{https://huggingface.co/datasets/HuggingFaceH4/aime_2024}}, AIME 2025\footnote{\url{https://huggingface.co/datasets/PrimeIntellect/AIME-25}}, AMC~\citep{li2024numinamath}, Minerva~\citep{dataset_minerva}, OlympiadBench~\citep{dataset_olympiad}, and MATH-500~\citep{dataset_math}. Our main results report avg@32 for the smaller test sets (AIME 2024, AIME 2025, AMC) and pass@1 for the larger benchmarks. For Pass@k evaluation, we generate $k$ completions and select the one with the highest reward score.
For cross-domain generalization, we test on ARC-c~\citep{arc}, GPQA-diamond (GPQA$^*$)~\citep{gpqa}, and MMLU-Pro.

\paragraph{RL Practice} 
We set $\beta=0$ to remove the KL loss term and use 0.28 for higher clip following GPPO w/ Clip-higher. Detailed implementation parameters are provided in Table~\ref{tab:hyper}. All training experiments are conducted using 8 A100 GPUs. We train 350 steps Qwen2.5-Math-7B, and 200 steps for others. Our implementation is based on verl\footnote{https://github.com/volcengine/verl}, which uses vLLM\footnote{https://github.com/vllm-project/vllm} as the rollout generators. We are thankful for these open-source repositories. 

\begin{table}[h]
\centering
\caption{\textcolor{my_blue}{Computation overhead analysis.}}
\label{tab:comp_cost}
\setlength{\tabcolsep}{2.5pt}  
\renewcommand{\arraystretch}{1.3} 
\resizebox{0.85\textwidth}{!}{%
\begin{tabular}{lcccc}
\toprule
& \textbf{DIVER(ours)} & \textbf{GRPO w/ Clip-Higher} & \textbf{Entropy-RL} & \textbf{Pass@k Training} \\
\midrule
Time Cost (350 Steps) & 29.75h & 28.46h & 28.13h & 30.73h \\
\bottomrule
\end{tabular}
}
\end{table}

\textcolor{my_blue}{The computation overhead of calculating textual diversity and equational diversity is negligible, since both metrics only involve rule-based calculations (e.g., n-gram matching for BLEU score or string recognition for equation extraction) without any feedforward or backpropagation of large-scale models. As shown in Table~\ref{tab:comp_cost}, DIVER only incurs a 5\% increase in training time compared to GRPO and Entropy-RL baselines, while requiring less training time than Pass@k Training baseline.}
\begin{table}[ht]
\centering
\caption{Hyperparameter settings}
\label{tab:hyper}
\begin{tabular}{lrlr}
\hline
\toprule[0.5pt]
\multicolumn{1}{l}{\bf Hyperparameter}  &\multicolumn{1}{l}{\bf Value} &
\multicolumn{1}{l}{\bf Hyperparameter}  &\multicolumn{1}{l}{\bf Value} \\
\hline
max prompt length & 1024        & KL coefficient $\beta$ & 0.0\\
max response length & 8192      & train temperature & 1.0\\
num generations $G$ & 8         & eval temperature & 0.6\\
gpu memory utilization & 0.85   & entropy coefficient &0.0\\
learning rate & 1e-6            & high clip ratio $\epsilon_h$ & 0.28 \\
train batch size & 128          & low clip ratio $\epsilon_l$ & 0.20\\
mini batch size & 32            & shaping ratio $\lambda$ & 0.1 \\
use dynamic batch size & True   &  diversity upper bound $\sigma$ & 0.65 \\
validate batch size & 512 \\
\hline
\toprule[0.5pt]
\end{tabular}
\end{table}

\subsection{System Prompt}
We use the same system prompt for training and inference in all our models except LLaMA-3.1-8B-Instruct:
\begin{tcolorbox}[
  colback=lightgray!30, 
  colframe=black, 
  width=\textwidth, 
  arc=0mm, 
  boxrule=0.5pt,  
]

Your task is to follow a systematic, thorough reasoning process before providing the final solution. This involves analyzing, summarizing, exploring, reassessing, and refining your thought process through multiple iterations. Structure your response into two sections: Thought and Solution. In the Thought section, present your reasoning using the format: ``\texttt{<think>}\textbackslash n thoughts \texttt{</think>}\textbackslash n''. Each thought should include detailed analysis, brainstorming, verification, and refinement of ideas. After ``\texttt{</think>}\textbackslash n'' in the Solution section, provide the final, logical, and accurate answer, clearly derived from the exploration in the Thought section. If applicable, include the answer in \texttt{\textbackslash boxed\{\}} for closed-form results like multiple choices or mathematical solutions.

\textbf{User:} This is the problem: \{QUESTION\}

\textbf{Assistant:} \texttt{<think>}
\end{tcolorbox}
For LLaMA-3.1-8B-Instruct, we use a simplified prompt which only includes the CoT prompt:
\begin{tcolorbox}[
  colback=lightgray!30, 
  colframe=black, 
  width=\textwidth, 
  arc=0mm, 
  boxrule=0.5pt,  
]

\textbf{User:}  \{QUESTION\}

\textbf{Answer:} Let's think step by step. 
\end{tcolorbox}

\clearpage

\section{Detailed Metrics Definition}

\subsection{Pass@k Performance}
Given a question $x$, we employ the model to generate $k$ independent and identically distributed (i.i.d.) responses. Each response is evaluated by a binary reward function, yielding $r_i \in {0, 1}$ where $r_i = 1$ indicates a correct response. The pass@$k$ metric quantifies the probability of obtaining at least one correct response among the $k$ samples:

\begin{equation}
    \text{pass@k} = \mathbb{P}\left[\bigvee_{i=1}^{k} (r_i = 1)\right] \\
    = \mathbb{E}\left[1 - \prod_{i=1}^{k}(1 - r_i)\right]
\end{equation}

While pass@1 evaluates evaluates the model's accuracy on first attempts, pass@k metric emphasizes the model's ability to generate diverse solutions and improve success rates through sampling. \textcolor{my_blue}{To rigorously demonstrate DIVER's consistent advantage, we repeat all experiments across Pass@2 to Pass@32, and report the means and standard deviations for a total of 3 runs.} As shown in figure \ref{fig:passk} and table \ref{tab:pass@k}, our method consistently outperforms all baselines approaches (GRPO w/ Clip-higher, Entropy-RL, and Pass@k Training) across the spectrum of pass@k metrics (k=2 to k=32) on both in-distribution benchmarks and out-of-distribution benchmarks. The performance gap is particularly significant at higher k values, where DIVER demonstrates superior exploration capabilities and achieves the highest average scores.

\begin{table*}[ht]
\centering

\caption{\textcolor{my_blue}{pass@k performance.}}
\label{tab:pass@k}
\setlength{\tabcolsep}{2.5pt}  
\renewcommand{\arraystretch}{1.3} 
\resizebox{\textwidth}{!}{%
\begin{tabular}{c|c|ccccc>{}c|ccc>{}c}
\toprule
\multirow{2}{*}{\textbf{Pass@k}}&\multirow{2}{*}{\textbf{Method}} & \multicolumn{6}{c}{\textbf{In-Distribution Performance}} & \multicolumn{4}{c}{\textbf{Out-of-Distribution Performance}} \\
\cmidrule(lr){3-8} \cmidrule(lr){9-12}
 & &\textbf{AIME 24/25} & \textbf{AMC} & \textbf{MATH-500} & \textbf{Minerva} & \textbf{Olympiad} & \textbf{Avg.} & \textbf{ARC-c} & \textbf{GPQA}$^{*}$ & \textbf{MMLU-Pro} & \textbf{Avg.} \\
\midrule

\multirow{5}{*}{\makecell{$k=2$}}
&Qwen2.5-Math-7B
& 23.3 $\pm$ 3.3/12.2 $\pm$ 5.1 & 55.4 $\pm$ 1.2 & 74.7 $\pm$ 0.8 & 27.2 $\pm$ 1.3  & 35.5 $\pm$ 0.8 & 38.1 $\pm$ 0.3 & 89.2 & 52.0 & 57.4 & 66.2 \\
&GPRO w/ Clip-higher
& 26.6 $\pm$ 5.8/18.9 $\pm$ 1.9 & 61.4 $\pm$ 3.7 & 85.7 $\pm$ 1.5 & 38.0 $\pm$ 0.2  & 50.4 $\pm$ 0.7 & 46.8 $\pm$ 1.3 & 89.2 & 52.0 & 57.4 & 66.2 \\
&Entropy-RL
& 26.7 $\pm$ 0.0/22.2 $\pm$ 3.9 & 63.9 $\pm$ 1.2 & 86.3 $\pm$ 0.5 & 35.5 $\pm$ 0.8  & 50.5 $\pm$ 2.4 & 47.5 $\pm$ 0.7 & 88.1 & 55.1 & 58.3 & 67.2 \\
&Pass@k Training
& 23.3 $\pm$ 3.4/22.2 $\pm$ 6.9 & 63.5 $\pm$ 0.7 & 85.1 $\pm$ 1.0 & \textbf{40.7 $\pm$ 1.2}  & 47.6 $\pm$ 1.6 & 47.1 $\pm$ 1.7 & 89.2 & 53.6 & \textbf{60.5} & 67.8 \\
\cmidrule(lr){2-8} \cmidrule(lr){9-12}
&\textbf{DIVER(ours)}
& \textbf{28.9 $\pm$ 5.1}/\textbf{27.8 $\pm$ 8.4} & \textbf{67.9 $\pm$ 1.4} & \textbf{87.7 $\pm$ 0.1} & 42.9 $\pm$ 1.1  & \textbf{52.1 $\pm$ 1.5} & \textbf{51.2 $\pm$ 1.9} & \textbf{90.3} & \textbf{57.7} & 60.2 & \textbf{69.4} \\
\midrule

\multirow{5}{*}{\makecell{$k=4$}}
&Qwen2.5-Math-7B
& 30.0 $\pm$ 5.8/16.6 $\pm$ 5.8 & 67.1 $\pm$ 3.5 & 81.5 $\pm$ 0.6 & 35.5 $\pm$ 0.6  & 43.8 $\pm$ 0.5 & 45.8 $\pm$ 1.7 & 92.3 & 65.0 & 66.4 & 74.6 \\
&GPRO w/ Clip-higher
& 32.2 $\pm$ 11.7/25.5 $\pm$ 3.9 & 72.3 $\pm$ 1.2 & 88.9 $\pm$ 1.1 & \textbf{47.3 $\pm$ 1.1}  & 53.3 $\pm$ 0.9 & 53.2 $\pm$ 0.9 & 92.3 & 65.0 & 66.4 & 74.6 \\
&Entropy-RL
& \textbf{38.9 $\pm$ 3.8}/\textbf{28.9 $\pm$ 3.8} & 74.3 $\pm$ 2.8 & 89.5 $\pm$ 0.1 & 40.6 $\pm$ 1.2  & 54.7 $\pm$ 0.8 & 54.5 $\pm$ 1.3 & 92.2 & 64.3 & 66.6 & 74.4 \\
&Pass@k Training
& 36.7 $\pm$ 3.4/24.4 $\pm$ 8.4 & 71.5 $\pm$ 3.0 & 88.5 $\pm$ 0.2 & 42.2 $\pm$ 1.1  & 55.7 $\pm$ 0.7 & 53.2 $\pm$ 1.6 & 92.2 & 66.5 & 64.8 & 74.5 \\
\cmidrule(lr){2-8} \cmidrule(lr){9-12}
&\textbf{DIVER(ours)}
& 37.8 $\pm$ 1.9/24.4 $\pm$ 3.9 & \textbf{75.9 $\pm$ 4.3} & \textbf{90.5 $\pm$ 0.5} & \textbf{47.3 $\pm$ 1.9}  & \textbf{57.7 $\pm$ 0.4} & \textbf{55.6 $\pm$ 1.6} & \textbf{93.3} & \textbf{70.4} & \textbf{69.1} & \textbf{77.6} \\

\midrule

\multirow{5}{*}{\makecell{$k=8$}}
&Qwen2.5-Math-7B
& 42.2 $\pm$ 5.1/23.3 $\pm$ 3.4 & 77.1 $\pm$ 2.1 & 86.7 $\pm$ 1.2 & 41.3 $\pm$ 0.6  & 51.3 $\pm$ 0.9 & 53.7 $\pm$ 1.1 & 95.3 & 75.5 & 74.3 & 81.7 \\
&GPRO w/ Clip-higher
& 35.6 $\pm$ 5.1/30.0 $\pm$ 5.8 & 77.9 $\pm$ 1.8 & 90.7 $\pm$ 0.3 & \textbf{52.1 $\pm$ 1.3}  & 58.5 $\pm$ 2.1 & 57.4 $\pm$ 1.1 & 95.3 & 75.5 & 74.3 & 81.7 \\
&Entropy-RL
& 41.1 $\pm$ 1.9/28.9 $\pm$ 3.8 & 80.3 $\pm$ 2.5 & 91.9 $\pm$ 0.8 & 45.6 $\pm$ 1.0  & 59.9 $\pm$ 0.5 & 58.0 $\pm$ 1.2 & 94.1 & 71.4 & 74.2 & 79.9 \\
&Pass@k Training
& 36.7 $\pm$ 3.4/24.4 $\pm$ 2.0 & 80.3 $\pm$ 0.7 & 90.7 $\pm$ 0.1 & 45.5 $\pm$ 1.9  & 60.2 $\pm$ 0.4 & 56.3 $\pm$ 0.6 & 93.3 & 77.0 & 71.6 & 80.6 \\
\cmidrule(lr){2-8} \cmidrule(lr){9-12}
&\textbf{DIVER(ours)}
& \textbf{45.6 $\pm$ 1.9}/\textbf{30.0 $\pm$ 0.0} & \textbf{83.9 $\pm$ 2.5} & \textbf{92.9 $\pm$ 0.1} & 50.6 $\pm$ 1.1  & \textbf{61.8 $\pm$ 0.7} & \textbf{60.8 $\pm$ 0.0} & \textbf{95.9} & \textbf{81.1} & \textbf{76.3} & \textbf{84.4} \\
\midrule

\multirow{5}{*}{\makecell{$k=16$}}
&Qwen2.5-Math-7B
& 43.3 $\pm$ 12.0/23.3 $\pm$ 3.4 & 81.1 $\pm$ 3.0 & 89.9 $\pm$ 0.3 & 49.1 $\pm$ 0.8  & 56.5 $\pm$ 1.5 & 56.0 $\pm$ 2.2 & 96.8 & 84.7 & 80.2 & 87.2 \\
&GPRO w/ Clip-higher
& 45.6 $\pm$ 5.1/37.8 $\pm$ 5.1 & 84.3 $\pm$ 2.4 & 92.3 $\pm$ 0.3 & \textbf{59.3 $\pm$ 2.2}  & 63.2 $\pm$ 0.5 & 63.7 $\pm$ 2.3 & 96.8 & 84.7 & 80.2 & 87.2 \\
&Entropy-RL
& 47.8 $\pm$ 5.1/35.6 $\pm$ 2.0 & \textbf{88.0 $\pm$ 2.1} & 93.7 $\pm$ 0.2 & 50.7 $\pm$ 1.6  & 62.8 $\pm$ 0.3 & 63.2 $\pm$ 0.6 & 95.7 & 78.6 & 79.7 & 84.7 \\
&Pass@k Training
& \textbf{50.0 $\pm$ 6.7}/34.4 $\pm$ 8.4 & 82.7 $\pm$ 2.5 & 92.3 $\pm$ 0.1 & 50.6 $\pm$ 1.8  & 64.0 $\pm$ 0.5 & 62.3 $\pm$ 0.2 & 95.9 & 82.7 & 77.7 & 85.4 \\
\cmidrule(lr){2-8} \cmidrule(lr){9-12}
&\textbf{DIVER(ours)}
& 47.8 $\pm$ 5.1/\textbf{38.9 $\pm$ 3.8} & \textbf{88.0 $\pm$ 0.0} & \textbf{94.4 $\pm$ 0.4} & 56.5 $\pm$ 2.9  & \textbf{66.0 $\pm$ 0.7} & \textbf{65.3 $\pm$ 1.0} & \textbf{97.3} & \textbf{89.3} & \textbf{81.7} & \textbf{89.4} \\
\midrule

\multirow{5}{*}{\makecell{$k=32$}}
&Qwen2.5-Math-7B
& 52.2 $\pm$ 5.1/30.0 $\pm$ 3.3 & 88.8 $\pm$ 3.5 & 92.1 $\pm$ 0.5 & 54.4 $\pm$ 1.0  & 62.2 $\pm$ 0.8 & 63.3 $\pm$ 0.9 & 97.9 & 89.8 & 85.8 & 91.2 \\
&GPRO w/ Clip-higher
& 56.7 $\pm$ 10.0/42.2 $\pm$ 3.9 & 92.0 $\pm$ 2.8 & 94.4 $\pm$ 0.4 & 54.9 $\pm$ 0.4  & 65.8 $\pm$ 0.6 & 67.6 $\pm$ 0.8 & 97.9 & 89.8 & 85.8 & 91.2 \\ 
&Entropy-RL
& 54.4 $\pm$ 2.0/42.2 $\pm$ 1.9 & 90.9 $\pm$ 3.0 & 94.1 $\pm$ 0.3 & 51.4 $\pm$ 0.3  & 66.3 $\pm$ 0.5 & 67.7 $\pm$ 0.6 & 96.5 & 88.3 & 84.6 & 89.8 \\ 
&Pass@k Training
& 56.7 $\pm$ 5.8/33.3 $\pm$ 6.7 & 88.4 $\pm$ 0.7 & 93.8 $\pm$ 0.3 & 53.8 $\pm$ 1.5  & 68.0 $\pm$ 0.5 & 65.7 $\pm$ 1.7 & 96.9 & 89.2 & 84.0 & 90.0 \\
\cmidrule(lr){2-8} \cmidrule(lr){9-12}
&\textbf{DIVER(ours)}
& \textbf{58.9 $\pm$ 3.8}/\textbf{48.9 $\pm$ 1.9} & \textbf{92.4 $\pm$ 1.9} & \textbf{95.4 $\pm$ 0.2} & \textbf{64.4 $\pm$ 3.9}  & \textbf{69.0 $\pm$ 0.3} & \textbf{70.5 $\pm$ 0.1} & \textbf{98.0} & \textbf{89.8} & \textbf{87.0} & \textbf{91.6} \\ 
\bottomrule
\end{tabular}
}
\end{table*}

\begin{table*}[t]

\centering
\caption{\textcolor{my_blue}{Performance comparison with larger pass@k values.}}
\label{tab:larger_pass@k}
\resizebox{0.8\textwidth}{!}{%
\begin{tabular}{l|l|cccc}
\toprule
\textbf{Benchmark} & \textbf{Method} & \textbf{pass@128} & \textbf{pass@256} & \textbf{pass@512} & \textbf{pass@1024} \\
\midrule
\multirow{5}{*}{AIME24} 
& Qwen2.5-Math-7B & 63.3 & 70.0 & 76.7 & 80.0 \\
& GPRO w/ Clip-Higher & 70.0 & 73.3 & 76.7 & 76.7 \\
& Entropy-RL & 70.0 & 73.3 & 76.7 & 76.7 \\
& Pass@k Training & 70.0 & 76.7 & 80.0 & 80.0 \\
\cmidrule{2-6}
& \textbf{DVIER(ours)} & \textbf{76.7} & \textbf{80.0} & \textbf{83.3} & \textbf{86.7} \\
\midrule
\multirow{5}{*}{AIME25}
& Qwen2.5-Math-7B & 40.0 & 56.7 & 63.3 & 66.7 \\
& GPRO w/ Clip-Higher & 60.0 & 63.3 & 66.7 & 70.0 \\
& Entropy-RL & \textbf{63.3} & 63.3 & 66.7 & 73.3 \\
& Pass@k Training & \textbf{63.3} & 63.3 & \textbf{70.0} & 73.3 \\
\cmidrule{2-6}
& \textbf{DVIER(ours)} & \textbf{63.3} & \textbf{66.7} & \textbf{70.0} & \textbf{76.7} \\
\midrule
\multirow{5}{*}{AMC}
& Qwen2.5-Math-7B & 92.8 & \textbf{97.6} & 97.6 & 97.6 \\
& GPRO w/ Clip-Higher & 94.0 & 96.4 & 97.6 & 98.8 \\
& Entropy-RL & \textbf{96.4} & 96.4 & 97.6 & 98.8 \\
& Pass@k Training & 95.2 & 96.4 & 97.6 & \textbf{100.0} \\
\cmidrule{2-6}
& \textbf{DVIER(ours)} & \textbf{96.4} & \textbf{97.6} & \textbf{98.8} & \textbf{100.0} \\
\bottomrule
\end{tabular}
}
\end{table*}

\subsection{BLEU Score}
\label{app:bleu}

BLUE measures the similarity between a candidate string $c$ and a reference string $r$ by calculating the $n$-gram (short phrases of length $n$) precision, while also penalizing overly short outputs through a brevity penalty as
\begin{equation}
\mathrm{BLEU}=\mathrm{BP}\cdot \exp\!\left(\sum_{n=1}^{N} w_n \log p_n\right),\quad
p_n=\frac{\sum_{g\in G_n}\min\{C_c(g),C_r(g)\}}{\sum_{g\in G_n}C_c(g)},
\end{equation}
where $N$ is the maximum $n$-gram length considered, $p_n$ is the modified precision for $n$-grams of size $n$, $w_n$ is the weight for each $n$-gram level (usually uniform, e.g., $w_n=1/N$), $G_n$ is the set of $n$-grams, and $C_c(g)/C_r(g)$ counts how often the $n$-gram $g$ appears in string $c/r$. 
The brevity penalty $\mathrm{BP}$ is defined as
\begin{equation}
\mathrm{BP}=
\begin{cases}
1, & |c|\ge |r|,\\
\exp(1-|r|/|c|), & \text{otherwise}.
\end{cases}
\end{equation}

\subsection{Equation Extraction}
We extract mathematical formulas using regular expressions that identify three common LaTeX notation patterns: \verb|\\[ \\]|, \verb|\\( \\)|, and \verb|$ $|. Our implementation employs \texttt{re.findall()} to capture these patterns and stores them in a set structure to eliminate duplicates.

\textcolor{my_blue}{To evaluate the extraction accuracy, we compare the automatically extracted equation counts against human-annotated ground-truth values. For demonstration, we randomly sample 5 responses from different query inputs. Table \ref{tab:equation_extraction} presents the equation counts obtained through our extraction method alongside the corresponding ground-truth values. The observed failure rate is approximately $8.4\% (8/95)$.}

\textcolor{my_blue}{We note that since Equational Diversity (ED) serves as an intrinsic reward signal to encourage exploration during training, perfect extraction accuracy is not required. The metric only needs to approximate response diversity with sufficient fidelity to guide effective exploration. The observed accuracy level is adequate for this purpose, as evidenced by the consistent performance improvements shown in our main results.}

\begin{table}[ht]
\centering
\caption{\textcolor{my_blue}{Equation extraction accuracy evaluation on randomly sampled responses.}}
\label{tab:equation_extraction}
\setlength{\tabcolsep}{2.5pt}  
\renewcommand{\arraystretch}{1.3} 
\resizebox{0.85\textwidth}{!}{%
\begin{tabular}{lccccc}
\toprule
& \textbf{Response 1} & \textbf{Response 2} & \textbf{Response 3} & \textbf{Response 4} & \textbf{Response 5} \\
\midrule
No. of equations via extraction & 14 & 25 & 13 & 17 & 18 \\
Ground-truth value & 17 & 25 & 18 & 17 & 18 \\
\bottomrule
\end{tabular}
}
\end{table}

\clearpage
\section{More Analyze}

\subsection{Analysis of Diversity Metrics Validity}
\textcolor{my_blue}{To verify whether TD and ED truly capture semantic-level reasoning diversity, we conduct a correlation analysis comparing them with two semantic similarity metrics: 1) LLM-based scores: We compute the cosine similarity of the model's hidden states from the final layer outputs using \texttt{Qwen2.5-7B-Instruct} for all baselines and our method; 2) Embedding-based scores: We use \texttt{google/embeddinggemma-300m} to encode responses into embeddings and compute their cosine similarity.}

\textcolor{my_blue}{For each method (DIVER and baselines), we generate multiple responses per prompt and compute all diversity metrics within each response group. Evaluated on 1,560 rollouts total, the average metrics are shown in the table~\ref{tab:metric_validity} (↑ indicates higher values represent greater diversity; ↓ indicates the opposite). The results demonstrate strong consistency between TD/ED and LLM-/Embedding-based alternatives. Methods achieving higher TD/ED scores consistently exhibit lower LLM-/Embedding-based semantic similarity scores, confirming that our choice of diversity metrics effectively captures meaningful semantic diversity rather than merely surface-level variation. This validates the generality of TD and ED as reliable proxies for reasoning diversity.}
\begin{table*}[ht]
\centering
\caption{\textcolor{my_blue}{Correlation between diversity metrics across different methods. Higher TD/ED ($\uparrow$) and lower LLM/Embedding similarity scores ($\downarrow$) indicate greater diversity.}}
\label{tab:metric_validity}
\resizebox{\textwidth}{!}{%
\begin{tabular}{l|cc|cc|cc}
\toprule
\textbf{Method} & \textbf{TD ($\uparrow$)} & \textbf{ED ($\uparrow$)} & \textbf{LLM-based ($\downarrow$)} & \textbf{Embedding-based ($\downarrow$)} & \textbf{pass@1} & \textbf{pass@8} \\
\midrule
\textbf{DIVER(ours)} & 0.702 & 0.477 & 0.936 & 0.881 & 43.1 & 60.8 \\
GRPO w/ Clip-Higher & 0.805 & 0.575 & 0.916 & 0.841 & 40.7 & 57.4 \\
Entropy-RL & 0.549 & 0.294 & 0.947 & 0.954 & 41.8 & 58.0 \\
Pass@k Training & 0.441 & 0.200 & 0.959 & 0.967 & 41.5 & 56.3 \\
\bottomrule
\end{tabular}%
}
\end{table*}

\textcolor{my_blue}{Moreover, our diversity metrics are both computationally lightweight and easy to implement. The computation overhead of calculating textual diversity and equational diversity is negligible, since both metrics only involve rule-based calculations (e.g., n-gram matching for BLEU score or string recognition for equation extraction) without any feedforward or backpropagation of large-scale models (e.g., the above LLM-/Embedding-based metrics).}

\subsection{Analysis of Reasoning Quality}
\textcolor{my_blue}{To validate whether our approach achieves higher quality and more meaningful reasoning results, we evaluate all test responses using \texttt{deepseek-ai/DeepSeek-V3.2-Exp} as a judge model, ranking responses generated by the base model, baselines, and DIVER for each prompt. The evaluation comprehensively assesses four key dimensions: \texttt{Completeness}, \texttt{Logical Consistency}, \texttt{Relevance}, and \texttt{Efficiency}. The table~\ref{tab:llm_judge_rank} presents the final result over all six benchmarks: AIME24/25, AMC, MATH-500, Minerva, and Olympiad. It demonstrates DIVER's consistent superiority, achieving a 1.52 average ranking compared to baseline methods ranging from 2.57 to 3.19. DIVER solutions rank best (Rank 1) in 76.4\% of cases—more than double the next best method. This confirms our diversity measures capture meaningful improvements in reasoning quality, not merely surface-level variations.}

\begin{table}[ht]
\centering
\caption{\textcolor{my_blue}{Ranking reasoning quality across six mathematical benchmarks as judged by DeepSeek-V3.2-Exp. Overall Avg. shows aggregate performance and lower Avg. Rank($\downarrow$) is better.}}
\label{tab:llm_judge_rank}
\resizebox{\textwidth}{!}{%
\begin{tabular}{l|l|c|c|c|c|c|c}
\toprule
\textbf{Benchmark} & \textbf{Model} & \textbf{Avg. Rank(↓)} & \textbf{Rank 1(best)} & \textbf{Rank2} & \textbf{Rank 3} & \textbf{Rank4} & \textbf{Rank5 (worst)} \\
\midrule
\multirow{5}{*}{Overall Avg.} 
& Qwen2.5-Math-7B & 3.19 & 15.8\% & 23.6\% & 16.7\% & 14.8\% & 29.1\% \\
& GPRO w/ Clip-Higher & 2.77 & 21.1\% & 23.9\% & 24.9\% & 17.8\% & 13.2\% \\
& Entropy-RL & 2.86 & 22.9\% & 19.7\% & 22.0\% & 22.9\% & 12.5\% \\
& Pass@k Training & 2.57 & 31.7\% & 22.0\% & 16.8\% & 17.7\% & 11.8\% \\
\cmidrule{2-8}
& \textbf{DIVER(ours)} & \textbf{1.52} & \textbf{76.4\%} & 7.5\% & 7.1\% & 3.8\% & 5.2\% \\
\midrule
\multirow{5}{*}{AIME24} 
& Qwen2.5-Math-7B & 3.53 & 6.7\% & 23.3\% & 16.7\% & 16.7\% & 36.7\% \\
& GPRO w/ Clip-Higher & 3.13 & 6.7\% & 30.0\% & 23.3\% & 23.3\% & 16.7\% \\
& Entropy-RL & 3.47 & 6.7\% & 13.3\% & 30.0\% & 26.7\% & 23.3\% \\
& Pass@k Training & 2.73 & 26.7\% & 16.7\% & 20.0\% & 30.0\% & 6.7\% \\
\cmidrule{2-8}
& \textbf{DIVER(ours)} & \textbf{1.77} & \textbf{66.7\%} & 6.7\% & 16.7\% & 3.3\% & 6.7\% \\
\midrule
\multirow{5}{*}{AIME25} 
& Qwen2.5-Math-7B & 3.73 & 0.0\% & 30.0\% & 10.0\% & 16.7\% & 43.3\% \\
& GPRO w/ Clip-Higher & 2.83 & 16.7\% & 26.7\% & 23.3\% & 23.3\% & 10.0\% \\
& Entropy-RL & 3.00 & 20.0\% & 13.3\% & 23.3\% & 33.3\% & 10.0\% \\
& Pass@k Training & 2.70 & 26.7\% & 20.0\% & 26.7\% & 10.0\% & 16.7\% \\
\cmidrule{2-8}
& \textbf{DIVER(ours)} & \textbf{1.83} & \textbf{60.0\%} & 13.3\% & 16.7\% & 3.3\% & 6.7\% \\
\midrule
\multirow{5}{*}{AMC} 
& Qwen2.5-Math-7B & 3.17 & 20.5\% & 18.1\% & 15.7\% & 15.7\% & 30.1\% \\
& GPRO w/ Clip-Higher & 2.73 & 21.7\% & 20.5\% & 31.3\% & 15.7\% & 10.8\% \\
& Entropy-RL & 2.82 & 26.5\% & 18.1\% & 18.1\% & 21.7\% & 15.7\% \\
& Pass@k Training & 2.49 & 31.3\% & 25.3\% & 16.9\% & 15.7\% & 10.8\% \\
\cmidrule{2-8}
& \textbf{DIVER(ours)} & \textbf{1.27} & \textbf{85.5\%} & 8.4\% & 1.2\% & 3.6\% & 1.2\% \\
\midrule
\multirow{5}{*}{MATH-500} 
& Qwen2.5-Math-7B & 2.41 & 39.8\% & 21.6\% & 13.0\% & 9.0\% & 16.6\% \\
& GPRO w/ Clip-Higher & 2.27 & 43.8\% & 18.2\% & 16.2\% & 11.0\% & 10.8\% \\
& Entropy-RL & 2.24 & 43.4\% & 20.0\% & 13.6\% & 14.8\% & 8.2\% \\
& Pass@k Training & 2.19 & 48.2\% & 16.2\% & 13.4\% & 12.6\% & 9.6\% \\
\cmidrule{2-8}
& \textbf{DIVER(ours)} & \textbf{1.40} & \textbf{82.6\%} & 5.0\% & 4.8\% & 4.8\% & 2.8\% \\
\midrule
\multirow{5}{*}{Minerva} 
& Qwen2.5-Math-7B & 3.14 & 12.1\% & 26.8\% & 21.3\% & 14.3\% & 25.4\% \\
& GPRO w/ Clip-Higher & 2.93 & 17.6\% & 22.8\% & 24.3\% & 19.5\% & 15.8\% \\
& Entropy-RL & 2.79 & 18.0\% & 25.4\% & 26.1\% & 20.6\% & 9.9\% \\
& Pass@k Training & 2.70 & 25.4\% & 25.0\% & 16.2\% & 21.3\% & 12.1\% \\
\cmidrule{2-8}
& \textbf{DIVER(ours)} & \textbf{1.41} & \textbf{82.4\%} & 5.9\% & 4.4\% & 2.6\% & 4.8\% \\
\midrule
\multirow{5}{*}{Olympiad} 
& Qwen2.5-Math-7B & 3.18 & 15.7\% & 22.1\% & 18.5\% & 16.2\% & 27.6\% \\
& GPRO w/ Clip-Higher & 2.72 & 20.4\% & 25.4\% & 28.0\% & 14.0\% & 12.1\% \\
& Entropy-RL & 2.81 & 21.1\% & 22.8\% & 20.9\% & 24.2\% & 10.9\% \\
& Pass@k Training & 2.60 & 25.9\% & 28.5\% & 17.3\% & 16.4\% & 11.9\% \\
\cmidrule{2-8}
& \textbf{DIVER(ours)} & \textbf{1.45} & \textbf{81.2\%} & 5.9\% & 4.0\% & 4.8\% & 4.0\% \\
\bottomrule
\end{tabular}%
}
\end{table}

\subsection{Analysis of Model Scale and Reasoning Horizon}

We further explore DIVER's adaptability across various language models, including \textit{small, weak} or \textit{different architecture} models. As shown in Figure.\ref{fig:model_scacle}, DIVER consistently outperforms baselines across all model variants, demonstrating its strong generalization capabilities.
\begin{table}[ht]
\centering
\caption{Overall performance on six competition-level benchmark performance on Qwen2.5-Math-1.5B, Qwen2.5-7B-Base,  LLaMA-3.1-8B-Instruct \textcolor{my_blue}{and DeepSeek-R1-Distill-Qwen-7B.}}
\label{tab:main_other_models}
\resizebox{\textwidth}{!}{
\begin{tabular}{lccccc>{}c|ccc>{}c}
\toprule
\multirow{2}{*}{\textbf{Model}} & \multicolumn{6}{c}{\textbf{In-Distribution Performance}} & \multicolumn{4}{c}{\textbf{Out-of-Distribution Performance}} \\
\cmidrule(lr){2-7} \cmidrule(lr){8-11}
 & \textbf{AIME 24/25} & \textbf{AMC} & \textbf{MATH-500} & \textbf{Minerva} & \textbf{Olympiad} & \textbf{Avg.} & \textbf{ARC-c} & \textbf{GPQA}$^{*}$ & \textbf{MMLU-Pro} & \textbf{Avg.} \\
\midrule
\normalrow
\multicolumn{11}{c}{\textit{Qwen2.5-Math-1.5B}} \\
\midrule
GRPO w/ Clip-higher & 9.9/7.7  & 42.8  & 68.6  & 20.6  & 33.4  & 30.5   & 59.5 & 30.1 & 30.2  & 39.9 \\
\textbf{DIVER} & 11.0/8.3  & 44.3  & 71.8  & 23.9  & 33.6 & 32.2    & 33.2 & 59.1  & 31.8 & 41.4 \\
\midrule
\normalrow
\multicolumn{11}{c}{\textit{Qwen2.5-7B-Base}} \\
\midrule
GRPO w/ Clip-higher & 15.8/12.1  & 49.2  & 78.2  & 27.6  & 39.1    & 37.0 & 89.8 & 28.6 & 55.4 & 57.9 \\
\textbf{DIVER} & 20.9/12.9  & 48.9  & 76.0  & 30.9  & 42.8  & 38.7  & 91.1 & 31.1 & 55.2 & 59.2 \\
\midrule
\normalrow
\multicolumn{11}{c}{\textit{LLaMA-3.1-8B-Instruct}} \\
\midrule
GRPO w/ Clip-higher & 6.9/1.4  & 23.3  & 53.0  & 23.5  & 22.0  & 21.8   & 89.2 & 14.8 & 50.8 & 51.6 \\
\textbf{DIVER} & 7.7/1.5  & 28.8  & 56.4  & 22.8  & 21.7    & 23.1   & 88.2 & 20.4 & 52.0 & 53.5 \\
\midrule
\normalrow
\multicolumn{11}{c}{\textcolor{my_blue}{\textit{DeepSeek-R1-Distill-Qwen-7B}}} \\
\textcolor{my_blue}{GPRO w/ Clip-Higher} & \textcolor{my_blue}{49.3/35.0} & \textcolor{my_blue}{81.7} & \textcolor{my_blue}{90.2} & \textcolor{my_blue}{35.7} & \textcolor{my_blue}{59.0} & \textcolor{my_blue}{58.5} & \textcolor{my_blue}{85.2} & \textcolor{my_blue}{2.6} & \textcolor{my_blue}{54.9} & \textcolor{my_blue}{47.5} \\
\textcolor{my_blue}{\textbf{DIVER(ours)}} & \textcolor{my_blue}{51.1/36.9} & \textcolor{my_blue}{82.0} & \textcolor{my_blue}{92.6} & \textcolor{my_blue}{36.8} & \textcolor{my_blue}{61.2} & \textcolor{my_blue}{60.1} & \textcolor{my_blue}{85.6} & \textcolor{my_blue}{10.7} & \textcolor{my_blue}{56.6} & \textcolor{my_blue}{50.9} \\
\bottomrule
\end{tabular}}
\end{table}

\textcolor{my_blue}{To further validate DIVER's effectiveness on models with longer reasoning horizons, we conducted additional experiments using DeepSeek-R1-Distill-Qwen-7B as the base model, which typically generates much longer responses (2500-3500 tokens). The results in Table~\ref{tab:main_other_models} show that DIVER maintains its consistent superiority even with these extended reasoning processes, achieving 60.1\% average in-domain performance and 50.9\% out-of-domain performance, compared to GPRO w/ Clip-Higher's 58.5\% and 47.5\% respectively. This confirms that our approach scales effectively to longer-horizon reasoning scenarios.}

\subsection{More Comprehensive Ablations}
\begin{wrapfigure}{r}{0.31\textwidth}
  \centering 
  \vspace{-30pt}
  \includegraphics[width=\linewidth]{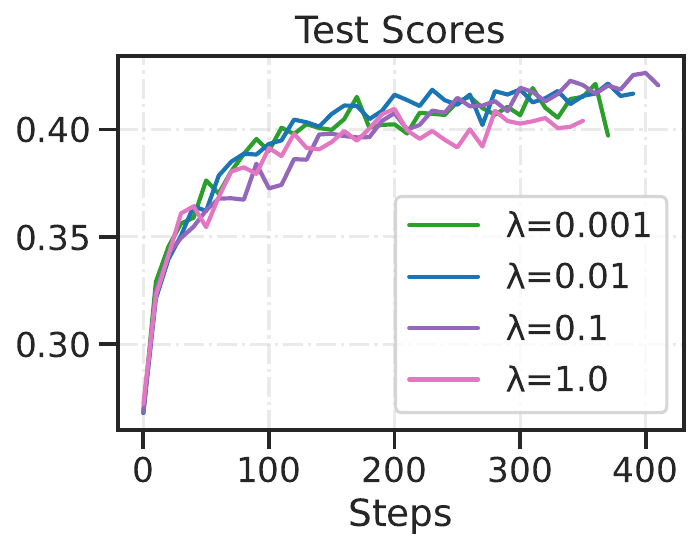}
  \vspace{-20pt}
  \caption{Average test scores with varying coefficients ($\lambda$).}
  \vspace{-10pt}
  \label{fig:hyper}
\end{wrapfigure}
\paragraph{Analysis of diversity reward shaping coefficient.} We further analyze the sensitivity of DIVER to the diversity reward shaping coefficient $\lambda$. Fig.~\ref{fig:hyper} demonstrates performance remains stable across different $\lambda$ values, with test score curves following similar trajectories regardless of coefficient magnitude. Even when $\lambda$ becomes relatively large, we observe only minor performance degradation without triggering reward hacking. This robustness to hyperparameter selection makes DIVER practical for real-world applications, as it doesn't require precise tuning of diversity reward weights.

\paragraph{Longer Horizons Improve Performance.} \textcolor{my_blue}{It is important to clarify that the "horizon" in our analysis refers to the window size used for calculating diversity metrics, rather than the full response length, in Fig.\ref{fig:tokens_appendix}. For instance, when we use a horizon of 200, we compute Textual Diversity (TD) and Equational Diversity (ED) metrics only on the first 200 tokens of a generated response, even though the complete response may be much longer (1600-1800 tokens in our main experiments).}
\begin{figure}[ht]
    \centering
    \includegraphics[width=0.9\linewidth]{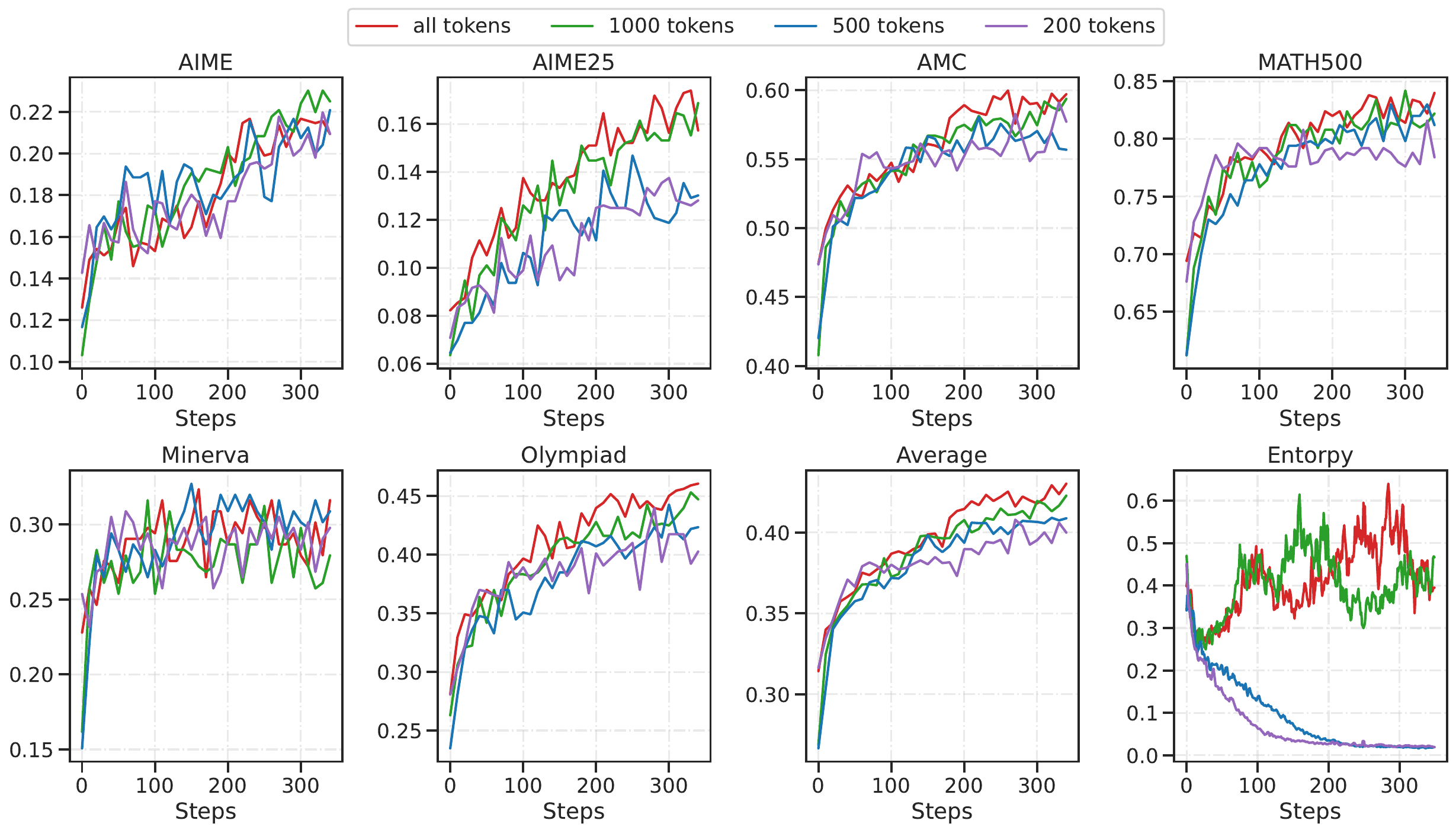}
    \caption{DIVER performance under different token horizon constraints.}
    \vspace{-10pt}
    \label{fig:tokens_appendix}
\end{figure}

\paragraph{Hyperparameter Robustness.} \textcolor{my_blue}{To analyze hyperparameter sensitivity, we evaluate DIVER and baseline across different group sizes and training temperatures. As shown in Table~\ref{tab:hyperparameter_comparison}, DIVER demonstrates superior robustness to hyperparameter variations. When reducing group size from G = 8 to G = 4, DIVER shows a smaller relative performance decline than GPRO w/ Clip-Higher, while maintaining a 2.5 point advantage. Similarly, when decreasing temperature from t = 1.0 to t = 0.6, DIVER exhibits minimal performance fluctuation (0.3 point drop) compared to the baseline's more significant 1.2 point reduction. This stability across different configuration settings highlights DIVER's reliability in practical scenarios where optimal hyperparameter tuning may be constrained.}
\vspace{-20pt}
\begin{table}[ht]
\centering
\caption{\textcolor{my_blue}{Performance comparison across different hyperparameters.}}

\label{tab:hyperparameter_comparison}

\resizebox{0.85\textwidth}{!}{

\begin{tabular}{cccccccc c}

\toprule
\textbf{Hyperparameter} & \textbf{Model} & \textbf{AIME 24} & \textbf{AIME 25} & \textbf{AMC} & \textbf{MATH-500} & \textbf{Minerva} & \textbf{Olympiad} & \textbf{Avg.} \\
\midrule
\normalrow
\multicolumn{9}{c}{\textit{Group Size G}} \\
\midrule
\multirow{2}{*}{G = 8} & GPRO w/ Clip-Higher & 18.9 & 16.4 & 57.3 & 81.2 & 28.7 & 41.5 & 40.7 \\
& \textbf{DIVER} & 23.8 & 16.4 & 60.9 & 84.4 & 29.4 & 44.0 & 43.1 \\
\midrule
\multirow{2}{*}{G = 4} & GPRO w/ Clip-Higher & 16.3 & 12.1 & 55.8 & 78.6 & 30.9 & 39.1 & 38.8 \\
& \textbf{DIVER} & 20.5 & 14.4 & 59.6 & 81.8 & 29.4 & 42.2 & 41.3 \\
\midrule
\normalrow
\multicolumn{9}{c}{\textit{Train Temperature}} \\
\midrule
\multirow{2}{*}{t = 1.0} & GPRO w/ Clip-Higher & 18.9 & 16.4 & 57.3 & 81.2 & 28.7 & 41.5 & 40.7 \\
& \textbf{DIVER} & 23.8 & 16.4 & 60.9 & 84.4 & 29.4 & 44.0 & 43.1 \\
\midrule
\multirow{2}{*}{t = 0.6} & GPRO w/ Clip-Higher & 18.0 & 14.1 & 54.9 & 80.0 & 29.8 & 40.3 & 39.5 \\
& \textbf{DIVER} & 20.6 & 14.4 & 59.8 & 83.2 & 33.1 & 42.6 & 42.8 \\
\bottomrule
\end{tabular}}
\vspace{-400em}
\end{table}
\clearpage
\section{Case Study}\label{appendix:case}
\begin{tcolorbox}[center,breakable,title=\Large\centering{Question and Answer}]
\columnseprule=0.6pt
\textbf{Prompt:} \textit{\textless system prompt \textgreater} This is the problem: For $k \geq 3$, we define an ordered $k$-tuple of real numbers $\left(x_{1}, x_{2}, \ldots, x_{k} \right)$ to be special if, for every $i$ such that $1 \leq i \leq k$, the product $x_{1} \cdot x_{2} \cdot \ldots \cdot x_{k}=x_{i}^{2}$. Compute the smallest value of $k$ such that there are at least 2009 distinct special $k$-tuples.\\\\
\textbf{Answer:} "\$12\$"
\end{tcolorbox}

\begin{tcolorbox}[center,breakable,title=\Large\centering{DIVER Case}]
\columnseprule=0.6pt
\textcolor{my_purple}{\Large{Response 1:}}\\\\
\textit{\textless think\textgreater}\\
1. If \( P = 1 \):
\[ x_1 \cdot x_2 \cdot \cdots \cdot x_k = 1. \]
Then for each \( i \):
\[ 1 = x_i^2 \implies x_i = 1 \text{ or } x_i = -1. \]
Each \( x_i \) can be either 1 or -1, and the product of all \( x_i \) must be 1. We need to count the number of such \( k \)-tuples. The number of \( k \)-tuples of \( 1 \)’s and \(-1\)’s (i.e., each \( x_i \) can be \( 1 \) or \( -1 \)) whose product is \( 1 \) can be counted as follows: the number of \( -1 \)'s must be even (since an even number of \(-1 \)'s will result in a product of \( 1\)). The total number of \( k \)-tuples where each \( x_i \) is either \( 1 \) or \(-1\) is \( 2^k \). Half of these tuples will have an even number of \(-1 \)’s (since changing the sign of any one element changes the parity of the number of \(-1 \)’s), hence there are \( 2^{k-1} \) such tuples.

2. If \( P = -1 \):
\[ x_1 \cdot x_2 \cdot \cdots \cdot x_k = -1. \]
Then for each \( i \):
\[ -1 = x_i^2 \]
This is impossible for real numbers since \( x_i^2 \geq 0 \) for all real \( x_i \). Thus, there are no solutions when \( P = -1 \) with real numbers.

Therefore, the valid solutions come only from the case \( P = 1 \), and there are \( 2^{k-1} \) such \( k \)-tuples. We need \( 2^{k-1} \geq 2009 \):
\[ 2^{k-1} \geq 2009. \]
To find the smallest \( k \):
\[ k-1 \geq \log_2 2009. \]
We calculate \( \log_2 2009 \):
\[ \log_2 2009 \approx 10.97 \]
Since \( k-1 \) must be an integer, we take the ceiling of 10.97, thus:
\[ k-1 \geq 11 \implies k \geq 12. \]
\textit{\textless /think\textgreater}\\
From the analysis above, the smallest \( k \) such that there are at least 2009 distinct special \( k \)-tuples is \( k = 12 \), because \( 2^{11} = 2048 \geq 2009 \), and \( k = 11 \) would yield only \( 2^{10} = 1024 \) tuples, which is less than 2009.

Hence, the smallest \( k \) for which there are at least 2009 distinct special \( k \)-tuples is \( k = 12 \). The final answer is:
\[
\boxed{12}
\]
\noindent\hdashrule{\linewidth}{0.5pt}{2pt 2pt}\\

\textcolor{my_purple}{\Large{Response 2:}}\\\\
\textit{\textless think\textgreater}\\
For \( P \neq 0 \), we can divide both sides by \( P \):
\[ 1 = P^{\frac{k}{2} - 1} \cdot (-1)^{k-n}. \]
For this equation to hold, two conditions must be satisfied:
1. \( P^{\frac{k}{2} - 1} = 1 \), implying \( \frac{k}{2} - 1 = 0 \) or \( P = 1 \) (because \( P \) is a real number).
2. \( (-1)^{k-n} = 1 \), implying that \( k - n \) must be even.
From \( \frac{k}{2} - 1 = 0 \), we get:
\[ \frac{k}{2} = 1 \implies k = 2. \]
However, the problem states \( k \geq 3 \), so \( k = 2 \) is not valid. Therefore, \( P = 1 \). So, \( P = 1 \) and thus \( (\sqrt{P})^2 = 1 \), meaning \( x_i^2 = 1 \) for all \( i \). Therefore, each \( x_i \) can only be \( 1 \) or \( -1 \). Now, we need \( (-1)^{k-n} = 1 \), meaning \( k - n \) is even, implying \( n \) and \( k \) have the same parity (both are either even or both are odd).
Next, let’s determine the number of distinct \( k \)-tuples. If each \( x_i \) can be either \( 1 \) or \( -1 \), there are \( 2^k \) possible \( k \)-tuples in total. However, not all of them will satisfy the condition, because they must maintain \( P = 1 \). The product \( x_1 x_2 \cdots x_k = 1 \) if and only if an even number of \( x_i \)'s are \( -1 \). The number of ways to choose an even number of \( -1 \)'s out of \( k \) positions is given by the sum of binomial coefficients of even indices:
\[ \sum_{\substack{0 \leq j \leq k \\ j \text{ even}}} \binom{k}{j}. \]
It is known that the sum of binomial coefficients of even indices equals \( 2^{k-1} \):
\[ \sum_{\substack{0 \leq j \leq k \\ j \text{ even}}} \binom{k}{j} = 2^{k-1}. \]
Thus, there are \( 2^{k-1} \) distinct special \( k \)-tuples. We need \( 2^{k-1} \geq 2009 \):
\[ 2^{k-1} \geq 2009. \]
To find the smallest \( k \):
\[ k-1 \geq \log_2(2009). \]
Since \( \log_2(2009) \approx 10.97 \), we round up to the next integer because \( k-1 \) must be an integer:
\[ k-1 \geq 11. \]
This gives \( k \geq 12 \).
\textit{\textless /think\textgreater}\\
The smallest \( k \) that satisfies \( 2^{k-1} \geq 2009 \) is \( k = 12 \). To verify, when \( k = 12 \):
\[ 2^{12-1} = 2^{11} = 2048, \]
which is indeed greater than or equal to 2009. Thus, the smallest \( k \) such that there are at least 2009 distinct special \( k \)-tuples is \( k = 12 \).
Therefore, the smallest \( k \) is:
\[
\boxed{12}
\]
\end{tcolorbox}

\begin{tcolorbox}[center,breakable,title=\Large\centering{Entropy-RL Case}]
\columnseprule=0.6pt
\textcolor{my_purple}{\Large{Response 1:}}\\\\
\textit{\textless think\textgreater}\\
the given problem states that $k\ge 3$, so we need to reconsider another approach since $k=2$ isn't applicable directly for $k\ge 3$. So let's say $P = t^2$ where $t = \sqrt{P}$ and then we need $t^2 = t^k$. If we let $t = \sqrt{P}$, we get $$t^2 = t^k \implies t^2 - t^k = 0 \implies t^2(1 - t^{k-2}) = 0.$$ This implies that $t^2 = 0$ or $t^{k-2} = 1$. First, if $t^2 = 0$, then $t = 0$, and thus all the coordinates should be zero yielding once again the trivial solution, leading to the previous case.

For $t^{k-2} = 1$, we have two solutions:

1. $t=1$ and
2. $t=-1$.

When $t = 1$, then $P = 1$, and each $x_i = \pm 1$, and when $t=-1$, $P = 1$, and again each $x_i = \pm 1$.

Hence, each coordinate can be either $1$ or $-1$, giving $2^k$ total combinations of $k$-tuples. Since we already have the trivial all-zero solution giving just one, we'll focus on the remaining ones, so the total number of solutions is $2^k - 1$. We need $2^k-1 \ge 2009$. Let's calculate smallest $k$:

We need to find the smallest integer $k$ such that

\[ 2^k - 1 \ge 2009,\] or

\[ 2^k \ge 2010.\] Now let's calculate powers of 2:

\[
\begin{aligned}
& 2^{10} = 1024, \\
& 2^{11} = 2048.
\end{aligned}
\]

It follows that the smallest value for which $2^k \ge 2010$ is when $k =11.$

\textit{\textless /think\textgreater}\\
Now that we've reasoned through the problem, we conclude:

That would mean $k = 11$ would yield $2^{11} = 2048$, and then there'd be $2048 -1 = 2047$ nonzero solutions plus 1 trivial solution leading to $2048$ total tuples; thus $k = 11$ suffices because even $k=10$ would not yield enough (as $2^{10}-1 = 1023 < 2009$). So the final answer must be:

\[
\boxed{11}
\]
\noindent\hdashrule{\linewidth}{0.5pt}{2pt 2pt}\\

\textcolor{my_purple}{\Large{Response 2:}}\\\\
\textit{\textless think\textgreater}\\
least 2009 distinct special \(k\)-tuples, we must have

\[ 2^k \geq 2009.\]

We need to find the smallest integer \(k\) such that \(2^k \geq 2009\).

Let’s calculate the powers of 2:

\[
\begin{aligned}
& 2^{10} = 1024, \\
& 2^{11} = 2048.
\end{aligned}
\]

We can see that \(2^{10} = 1024 < 2009\) and \(2^{11} = 2048 > 2009\). Thus, the smallest \(k\) such that \(2^k \geq 2009\) is 11.\\
\textit{\textless /think\textgreater}\\
The smallest value of \( k \) such that there are at least 2009 distinct special \( k \)-tuples is
\[\boxed{11}\]

\end{tcolorbox}

\begin{tcolorbox}[center,breakable,title=\Large\centering{GRPO w/ Clip-higher Case}]
\columnseprule=0.6pt
\textcolor{my_purple}{\Large{Response 1:}}\\\\
\textit{\textless think\textgreater}\\
If $x = 0$, then $P = 0$, and all $x_i$ must be zero, which gives us one more solution.

So, for $k > 3$, we have $k-1$ distinct solutions (the $k-2$ roots of unity plus the solution $x = 0$). We need at least 2009 distinct special $k$-tuples. Therefore, we need $k-1 \geq 2009$, which gives us $k \geq 2010$. The smallest value of $k$ that satisfies this inequality is $k = 2010$.

Let's verify this. For $k = 2010$, we have $2010-1 = 2009$ distinct solutions from the roots of unity plus the solution $x = 0$, which gives us exactly 2009 distinct special $k$-tuples. Therefore, the smallest value of $k$ such that there are at least 2009 distinct special $k$-tuples is $k = 2010$.
\textit{\textless /think\textgreater}\\

The smallest value of $k$ such that there are at least 2009 distinct special $k$-tuples is \[\boxed{2010}\]

\noindent\hdashrule{\linewidth}{0.5pt}{2pt 2pt}\\

\textcolor{my_purple}{\Large{Response 2:}}\\\\
\textit{\textless think\textgreater}\\
To count the number of distinct special $k$-tuples, we need to consider the number of solutions to these equations. If $k-2$ is even, we have at least 2 distinct tuples $(1, 1, \ldots, 1)$ and $(-1, -1, \ldots, -1)$. If $k-2$ is odd, we can have more complex tuples involving $-1$. 

To have at least 2009 distinct special $k$-tuples, we need to find the smallest $k$ such that the number of solutions is at least 2009. If $k-2$ is even, we have at least 2 solutions. If $k-2$ is odd, we can have more solutions, but we need to count them carefully. For $k-2$ odd, we can have tuples with an even number of $-1$'s and the rest $1$'s, and permutations of $(x, -x, -1, -1, \ldots, -1)$. 

The number of distinct tuples grows quickly with $k$, and we need to find the smallest $k$ such that the number of solutions is at least 2009. Let's calculate this step by step. For $k-2$ odd, the number of distinct tuples is related to the number of ways to choose positions for $-1$ and the values of $x$ and $-x$. 

Let's check the number of solutions for different $k$:

- For $k = 3$, we have $(1, 1, 1)$ and $(-1, -1, -1)$, so 2 solutions.
- For $k = 4$, we have $(1, 1, 1, 1)$, $(-1, -1, -1, -1)$, and permutations of $(x, -x, -1, -1)$, so more than 2 solutions.
- For $k = 5$, we have $(1, 1, 1, 1, 1)$, $(-1, -1, -1, -1, -1)$, and more solutions involving $-1$.

We can see that as $k$ increases, the number of solutions increases. We need to find the smallest $k$ such that the number of solutions is at least 2009. 

Let's calculate the number of solutions for $k = 2m+3$ (since $k-2$ must be odd):

- For $k = 2m+3$, we have at least the tuples with all $1$'s and all $-1$'s, plus permutations of $(x, -x, -1, -1, \ldots, -1)$. The number of such tuples is related to the number of ways to choose positions for $-1$ and the values of $x$ and $-x$. 

The number of distinct tuples for $k = 2m+3$ is at least $2 + \text{number of ways to choose positions for } -1 \text{ and } x, -x$. This grows quickly with $k$. 

By trial and error, we can see that for $k = 2011$, we have $k-2 = 2009$, which is odd, and we can have at least 2009 distinct tuples. 

\textit{\textless /think\textgreater}\\

The smallest value of $k$ such that there are at least 2009 distinct special $k$-tuples is $k = 2011$. 

The final answer is:
\[
\boxed{2011}
\]

\end{tcolorbox}

\begin{tcolorbox}[center,breakable,title=\Large\centering{Pass@k Training Case}]
\columnseprule=0.6pt
\textcolor{my_purple}{\Large{Response 1:}}\\\\
\textit{\textless think\textgreater}\\
Since $P = P^{k/2}$, we have two cases to consider:
1. If $P = 0$: In this case, all $x_i$ must be zero because otherwise, if any $x_i$ is non-zero, the product would not be zero. This gives us the $k$-tuple $(0, 0, \ldots, 0)$, which is indeed a solution. For $P = 0$, it forms one special $k$-tuple.
2. If $P \neq 0$: We can divide both sides of the equation by $P^{k/2}$ (since $P > 0$):
\[ 1 = (-1)^{k-a}. \]
This implies that $(-1)^{k-a} = 1$, meaning $k-a$ must be even. Consequently, $a$ and $k$ must have the same parity (both even or both odd). Moreover, since \(P = (\sqrt{P})^2 = (\sqrt{P})^k \cdot (-1)^{k-a}\), we get:
\[ 1 = (-1)^{k-a}. \]
This implies $k - a$ is even, so $a$ and $k$ are of the same parity (both even or both odd).

Next, we need to count the number of distinct $k$-tuples. For non-zero $P$, the number of distinct $k$-tuples is the number of ways to choose $a$ (with $0 \leq a \leq k$) such that $a$ and $k$ share the same parity. The number of choices for $a$ (with $0 \leq a \leq k$) having the same parity as $k$ is $\frac{k+1}{2}$ if $k$ is odd and $\frac{k}{2} + 1$ if $k$ is even. Including the zero product case, we need the total number of distinct $k$-tuples to be at least 2009:
\[ 1 + \text{(number of $k$-tuples with $P \neq 0$)}. \]
If $k$ is odd, there are $\frac{k+1}{2}$ choices for $a$, making the total number of distinct $k$-tuples:
\[ 1 + \frac{k+1}{2}. \]
If $k$ is even, there are $\frac{k}{2} + 1$ choices for $a$, making the total number of distinct $k$-tuples:
\[ 1 + \left(\frac{k}{2} + 1\right) = 1 + \frac{k}{2} + 1 = \frac{k}{2} + 2. \]
We need this total to be at least 2009. We will check both cases separately.
1. For $k$ odd: \( 1 + \frac{k+1}{2} \geq 2009 \)
\[ \frac{k+1}{2} + 1 \geq 2009, \]
\[ \frac{k+1}{2} \geq 2008, \]
\[ k+1 \geq 4016, \]
\[ k \geq 4015. \]
2. For $k$ even: \( \frac{k}{2} + 2 \geq 2009 \)
\[ \frac{k}{2} + 2 \geq 2009, \]
\[ \frac{k}{2} \geq 2007, \]
\[ k \geq 4014. \]
The smallest $k$ that satisfies either condition is \(k = 4015\) (since \(4015\) is odd and satisfies the first condition). Hence, the smallest $k$ is 4015.\\
\textit{\textless /think\textgreater}\\
To find the smallest $k$ such that there are at least 2009 distinct special $k$-tuples, we need to satisfy the condition based on the derived inequalities. We found that for \( k \geq 4015 \) (when $k$ is odd) or \( k \geq 4014 \) (when $k$ is even):

1. If $k$ is odd, the smallest $k$ satisfying $k \geq 4015$.
2. If $k$ is even, the smallest $k$ satisfying $k \geq 4014$.

Since \(4015\) is the smallest value from these two cases, it gives us the final answer:
\[\boxed{4015}\]

\noindent\hdashrule{\linewidth}{0.5pt}{2pt 2pt}\\

\textcolor{my_purple}{\Large{Response 2:}}\\\\
\textit{\textless think\textgreater}\\
Now, let's consider the possible values of $x_i$. Since $P = x_i^2$, and $P$ is either $1$ or $-1$ (if $k-2$ is even), we have $x_i^2 = 1$ or $x_i^2 = -1$. The equation $x_i^2 = 1$ has two solutions: $x_i = 1$ or $x_i = -1$. The equation $x_i^2 = -1$ has no real solutions. Therefore, each $x_i$ must be either $1$ or $-1$.

If $P = 1$, then the number of $-1$'s in the tuple must be even. If $P = -1$, then the number of $-1$'s in the tuple must be odd. We need to count the number of such tuples.

Let's denote the number of $-1$'s in the tuple by $m$. If $P = 1$, then $m$ must be even, and if $P = -1$, then $m$ must be odd. The number of ways to choose $m$ positions out of $k$ for the $-1$'s is $\binom{k}{m}$. So, the total number of special $k$-tuples is the sum of $\binom{k}{m}$ for all even $m$ plus the sum of $\binom{k}{m}$ for all odd $m$. It is known that the sum of the binomial coefficients for even $m$ and odd $m$ are equal and each sum to $2^{k-1}$. Therefore, the total number of special $k$-tuples is $2^{k-1} + 2^{k-1} = 2^k$.

We need this to be at least 2009. So, we need to find the smallest $k$ such that $2^k \geq 2009$. We can calculate the powers of 2:
\[ 2^{10} = 1024, \]
\[ 2^{11} = 2048. \]
Since $2048 \geq 2009$, the smallest $k$ is 11. We can verify that $2^{11} = 2048$, which is indeed greater than or equal to 2009.

Therefore, the smallest value of $k$ such that there are at least 2009 distinct special $k$-tuples is $11$.

\textit{\textless /think\textgreater}\\

The smallest value of $k$ such that there are at least 2009 distinct special $k$-tuples is \[\boxed{11}\]

\end{tcolorbox}

\clearpage
\section{The Use of Large Language Models (LLMs)}
We utilize Large Language Models (LLMs) to assist with polishing the writing and improving text readability. Specifically, LLMs are employed for proofreading, enhancing grammar, and refining sentence structure. The LLM was used solely for editorial purposes to improve clarity and did not contribute to research ideation, experimental design, implementation, analysis, or scientific conclusions. All core research contributions, experiments, and analyses were conducted independently by the authors without LLM assistance.

\end{document}